\documentclass{article}
\PassOptionsToPackage{numbers,compress}{natbib}
\usepackage[main, final]{neurips_2026}

\usepackage[utf8]{inputenc}
\usepackage[T1]{fontenc}
\usepackage{amsmath,amssymb,amsfonts,amsthm,mathtools}
\usepackage{booktabs}
\usepackage{enumitem}
\usepackage{xcolor}
\usepackage{hyperref}
\usepackage{multirow}
\usepackage{array}
\usepackage{graphicx}
\usepackage{subcaption}
\usepackage{microtype}
\usepackage{url}
\usepackage{tikz}
\usepackage{pgfplots}
\usepackage{tabularx}
\usepackage{makecell}
\usepackage{wrapfig}
\usepackage{hyperref}

\definecolor{cycleblue}{RGB}{48,99,178}
\definecolor{cycleorange}{RGB}{218,124,48}
\definecolor{cycleteal}{RGB}{0,132,120}
\definecolor{cyclered}{RGB}{185,73,68}
\definecolor{cyclegray}{RGB}{118,118,118}
\definecolor{cyclepurple}{RGB}{108,85,165}

\newcommand{\softbadge}[2]{%
	\tikz[baseline=(X.base)]{%
		\node[
		rounded corners=1.1pt,
		inner xsep=3.0pt,
		inner ysep=1.25pt,
		fill=#1!10,
		draw=#1!28,
		line width=0.25pt
		] (X) {\textcolor{#1!78!black}{\scriptsize\bfseries #2}};
	}%
}

\newcommand{\thinmetricbar}[4]{%
	\begin{tikzpicture}[baseline=-0.55ex]
		\pgfmathsetmacro{\barw}{1.46}
		\pgfmathsetmacro{\val}{max(0,min(\barw,\barw*(#1-#2)/(#3-#2)))}
		\pgfmathsetmacro{\chance}{max(0,min(\barw,\barw*(0.5-#2)/(#3-#2)))}
		\fill[gray!12,rounded corners=0.35pt] (0,0) rectangle (\barw,0.105);
		\fill[#4!72,rounded corners=0.35pt] (0,0) rectangle (\val,0.105);
		\draw[gray!38,line width=0.22pt] (0,0) rectangle (\barw,0.105);
		\draw[black!45,line width=0.30pt] (\chance,-0.022) -- (\chance,0.127);
	\end{tikzpicture}%
}

\newcommand{\auroccell}[3]{%
	\makecell[c]{\textbf{#1}\\[-0.55mm]\thinmetricbar{#2}{0.5}{1.0}{#3}}%
}

\newcommand{\emptycell}[1]{%
	\makecell[c]{\textcolor{cyclegray!85!black}{#1}\\[-0.4mm]\softbadge{cyclegray}{n/a}}%
}

\pgfplotsset{
	compat=1.18,
	cyclebar/.style={
		ybar,
		axis line style={draw=gray!55},
		tick style={draw=gray!55},
		grid=major,
		major grid style={draw=gray!16},
		label style={font=\small},
		tick label style={font=\small},
		legend style={draw=none, fill=none, font=\small},
		scaled ticks=false,
		axis x line*=bottom,
		axis y line*=left,
		enlarge x limits=0.12,
	},
	cycleline/.style={
		axis line style={draw=gray!55},
		tick style={draw=gray!55},
		grid=major,
		major grid style={draw=gray!16},
		label style={font=\small},
		tick label style={font=\small},
		legend style={draw=none, fill=none, font=\scriptsize},
		scaled ticks=false,
		axis x line*=bottom,
		axis y line*=left,
	},
}

\hypersetup{
	colorlinks=true,
	linkcolor=blue!55!black,
	citecolor=blue!55!black,
	urlcolor=blue!55!black
}

\newtheorem{theorem}{Theorem}

\newtheorem{proposition}{Proposition}

\newtheorem{corollary}{Corollary}

\newcommand{\E}{\mathbb E}
\newcommand{\Prb}{\mathbb P}
\newcommand{\R}{\mathbb R}

\newcommand{\calU}{\mathcal U}
\newcommand{\calV}{\mathcal V}

\newcommand{\Dcyc}[1][]{\Delta_{\mathrm{cyc}#1}}
\newcommand{\Acccyc}{\mathrm{CycleAcc}}

\setlist{nosep,leftmargin=1.25em}

\title{Can the Parts Fool the Test?\\
	Counterfactual Pairing Cycles for Relational OOD}

\author{%
	Yibo Zhou\\
	Beihang University\\
	\texttt{ybzhou@buaa.edu.cn}
	\And
	Yirong Xiang\thanks{Corresponding author.}\\
	Air Force Medical Center\\
	\texttt{yirong\_xiang@163.com}
}

\begin{document}
	
	\maketitle
	
	\begin{abstract}
		Relational out-of-distribution errors are easy to describe and hard to evaluate cleanly. An image may be natural, a caption may be fluent, and all mentioned objects may be common, yet the caption may describe the wrong relation in the image. Many evaluations intended to test this behavior can be solved without checking the pairing at all, because negatives differ in caption style, template, length, object frequency, or image statistics. We propose counterfactual pairing cycles, a blocked evaluation primitive that reuses the same two images and the same two captions on both sides of the comparison, changing only which image is paired with which caption. This makes image-only, text-only, and additive component shortcuts cancel exactly in finite samples. The same contrast also gives a training loss for relational outlier exposure and a diagnostic for bilinear vision-language scores. Across synthetic controls, GQA, and COCO, cycles separate coarse image-caption matching from directional role-binding failures, while also identifying geometric size relations that remain strongly visible in global features. In a shortcut-contaminated COCO outlier-exposure stress test, a model selected by a near-perfect shortcut validation AUROC retains substantial relational signal but remains below cycle training on exact cycles; cycle training improves CycleAcc from \(0.882\) to \(0.927\). The main lesson is simple: to test whether a model understands a pairing, the benchmark should keep the parts fixed, change the pairing, and ask only about coupling. The official implementation is available \href{https://github.com/SDret/Official-Implementation-of-Neurips-2026-paper-Can-the-Parts-Fool-the-Test-/tree/main}{online}.
	\end{abstract}
	
	\section{Introduction}
	
	A wrong image-caption pair can look perfectly ordinary in isolation. The image is a real photograph. The caption is grammatical. The objects are familiar. The failure appears only when the two are put together: the caption assigns an attribute to the wrong object, reverses a spatial relation, or describes an event that belongs to a different image.
	
	This is the relational OOD problem we study. It is not an anomaly of the image alone or the caption alone. It is an anomaly of their coupling. This distinction has become important as vision-language models are deployed as general-purpose perceptual systems. Recent evaluations show that strong multimodal models can struggle with fine-grained composition, natural adversarial examples, spatial reasoning, and object-relation binding~\cite{chen2024mmstar,jiang2024naturalbench,hua2024mmcomposition,kamath2024hard,chen2024spatialvlm,jia2025omnispatial,stogiannidis2025mind}. These failures are small in form but large in consequence: a model may recognize the objects and still miss who is doing what to whom, which object is on the left, or which caption belongs to the image.
	
	The difficulty is not only that relational errors are hard for models. They are also hard to evaluate cleanly. Many benchmarks that test multimodal reasoning contain easier signals. If negative captions are shorter, come from a different generator, use a special template, or mention rarer objects, a text-only detector can succeed. If negative images differ in style or category distribution, an image-only detector can succeed. Similar concerns have motivated efforts to build vision-indispensable, adversarial, bidirectional, and composition-focused benchmarks for vision-language models~\cite{chen2024mmstar,jiang2024naturalbench,miranda2024bivlc,hua2024mmcomposition,kamath2024hard,schneider2024m5}. Related work on attribute recognition and spurious correlation likewise shows that label imbalance, component bias, and object co-occurrence can provide alternatives to learning the intended coupling~\cite{ijcai2023p203,pmlr-v235-zhou24j,zhou2025cooccurrence,zhou2025heterogeneous,zhou2026spurious}. The common lesson is that high benchmark performance can mislead when evaluation does not control what information the model is allowed to use.
	
	This issue is especially sharp for OOD detection. Classical and reconstruction-based OOD methods already show that the statistic used for training and validation determines what kind of abnormality is learned~\cite{hendrycks2017baseline,liang2018enhancing,liu2020energy,zhou2022rethinking}. Modern OOD methods increasingly use vision-language representations, negative labels, prompt tuning, and multimodal cues~\cite{jiang2024neglabel,zhang2024dualpattern,yu2024selfcalibrated,dong2024multiood,schneider2024m5}. These methods are valuable, but the standard OOD framing often asks whether an example looks unlike the training distribution. Relational OOD asks a different question. The image and the caption may both look in-distribution. What is out-of-distribution is the pairing. A detector that succeeds by scoring the image and caption separately may therefore solve the wrong problem.
	
	We propose a small change in the unit of evaluation. Instead of comparing arbitrary positive and negative pairs, we compare two ways of pairing the same components. Take two valid image-caption pairs. Now keep the two images and the two captions fixed, but swap the captions across images. This creates a complete four-entry block: two original pairs and two swapped pairs. We retain the block only when the swapped pairs are verified invalid by the construction annotations or filters.
	
	The test then asks a single question: does the model prefer the original pairing over the swapped pairing? The positive side and the negative side contain exactly the same images and exactly the same captions. Only the assignment between images and captions changes. As a result, image-only cues, text-only cues, and any additive combination of such component cues cancel inside the block. The benchmark no longer asks whether either side looks unusual. It asks whether the two sides match. This blocked comparison is the central object of the paper. We call it a \emph{counterfactual pairing cycle}. ``Counterfactual'' refers to the swapped pairings: the caption is fluent and the image is natural, but their association is wrong. ``Cycle'' refers to the complete four-entry structure: each image is paired once with its true caption and once with the other caption, and each caption is paired once with its true image and once with the other image. This symmetry is what makes the test insensitive to marginal shortcuts. We make three contributions:
	
	\begin{enumerate}[leftmargin=1.35em,itemsep=2pt,topsep=3pt]
		\item We formalize \emph{counterfactual pairing cycles} as a blocked evaluation primitive for relational OOD. In every block, the positive and negative sides have identical image and caption marginals.
		\item We characterize the finite-sample balance condition required to remove all additive image-only and text-only scores: a signed comparison cancels such scores if and only if every image and every caption has zero signed degree. The \(2\times2\) counterfactual pairing cycle is the minimal nondegenerate block satisfying this condition.
		\item We use the same contrast as a training loss for relational outlier exposure and as a diagnostic for bilinear dual-encoder scores. The bilinear cycle score factorizes into paired image and text differences. Also, we evaluate the protocol on synthetic controls, GQA, and COCO. Cycles separate coarse matching from several strict role-binding failures and show why shortcut validation can overstate relational transfer.
	\end{enumerate}
	
	\section{The cycle test}
	
	\subsection{Cycles and cycle accuracy}
	
	Let \(U\in\calU\) and \(V\in\calV\) be two components, such as an image and a caption. A counterfactual pairing cycle is a complete block
	\[
	\begin{array}{c|cc}
		& v_1 & v_2 \\
		\hline
		u_1 & \text{valid} & \text{invalid swap} \\
		u_2 & \text{invalid swap} & \text{valid}
	\end{array}
	\]
	constructed from two valid pairs \((u_1,v_1)\) and \((u_2,v_2)\). The swapped pairs are retained only if the dataset annotations or construction filters mark them invalid for the intended relation.
	
	Given a scalar compatibility score \(s:\calU\times\calV\to\R\), define the cycle contrast on block \(C\) by
	\[
	\Delta_C(s)
	=
	s(u_1,v_1)+s(u_2,v_2)-s(u_1,v_2)-s(u_2,v_1).
	\]
	The primary metric for exact-cycle experiments is
	\[
	\Acccyc(s)
	=
	\Prb_C\{\Dcyc[,C](s)>0\}
	+
	\frac12\Prb_C\{\Dcyc[,C](s)=0\}.
	\]
	Here and throughout the empirical sections, \(C\) denotes a retained exact cycle drawn uniformly from the evaluation split; when a population cycle distribution is used instead, it is stated explicitly. This half-tie convention makes every additive score exactly chance on every
	exact-cycle split in exact arithmetic. Reported results use the fixed
	absolute numerical tie tolerance specified in
	Appendix~\ref{app:cycle-evaluation}.
	
	\subsection{Why the shortcut disappears}
	
	The cancellation used by cycles is not specific to a particular model class. It is a finite-sample balance condition on the comparison itself. Any linear comparison of evaluated pairs can be viewed as a signed bipartite graph whose left vertices are images, whose right vertices are captions, and whose signed edges are pair evaluations. Let \(E\) be a finite multiset of evaluated pairs \(e=(u_e,v_e)\), with weights \(w_e\in\mathbb R\). Positive weights correspond to the side of the comparison that should receive higher score, and negative weights to the opposite side. Define
	\[
	T_w(s)=\sum_{e\in E} w_e s(u_e,v_e).
	\]
	For each image \(u\) and caption \(v\), define their signed degrees
	\[
	d_w(u)=\sum_{e:u_e=u} w_e,
	\qquad
	d_w(v)=\sum_{e:v_e=v} w_e .
	\]
	
	\begin{theorem}[Finite-sample additive invariance]
		\label{thm:signed_degree}
		The contrast \(T_w\) cancels all image-only and text-only additive scores, meaning
		\[
		T_w(a(u)+b(v))=0
		\]
		for all functions \(a\) and \(b\), if and only if
		\[
		d_w(u)=0\quad\text{for every image }u,
		\qquad
		d_w(v)=0\quad\text{for every caption }v .
		\]
	\end{theorem}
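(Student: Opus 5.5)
The argument rests on rewriting \(T_w\) applied to an additive score as a sum over vertices rather than over edges. Fix arbitrary functions \(a:\calU\to\R\) and \(b:\calV\to\R\). Since \(T_w\) is linear in \(s\), we have \(T_w(a(u)+b(v)) = \sum_{e\in E} w_e a(u_e) + \sum_{e\in E} w_e b(v_e)\). In the first sum, group the edges by their left endpoint, and in the second by their right endpoint. Because \(E\) is a finite multiset, both groupings are finite rearrangements. This gives
\[
T_w(a(u)+b(v)) = \sum_{u} a(u)\, d_w(u) + \sum_{v} b(v)\, d_w(v),
\]
where the sums range over the finitely many images and captions that appear in \(E\). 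Vertices not appearing in \(E\) have empty degree sums, so their signed degree is trivially zero.

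For the ``if'' direction, read the identity directly. If every signed degree vanishes, each term is zero for every choice of \(a\) and \(b\), so \(T_w\) annihilates all additive scores.

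For the ``only if'' direction, test the identity on indicator functions. Fix an image \(u_0\). Take \(a=\mathbf 1\{u=u_0\}\) and \(b\equiv 0\). The identity collapses to \(T_w(a)=d_w(u_0)\), so the hypothesis forces \(d_w(u_0)=0\). Symmetrically, taking \(a\equiv 0\) and \(b=\mathbf 1\{v=v_0\}\) forces \(d_w(v_0)=0\) for every caption \(v_0\). Image-only and caption-only scores are each special cases of additive scores, since the other part can be set to zero. So the indicator test uses only what the hypothesis provides.

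Nothing here is a genuine obstacle. The one step needing care is the regrouping, specifically the multiset bookkeeping: a repeated pair \((u,v)\) must contribute each of its weights to both \(d_w(u)\) and \(d_w(v)\). This is exactly how the definitions \(d_w(u)=\sum_{e:u_e=u}w_e\) and \(d_w(v)=\sum_{e:v_e=v}w_e\) are written, so the identity holds verbatim. As a closing remark, I would check that the \(2\times 2\) cycle has weights \(+1,+1,-1,-1\). Each of its four vertices then has degree \(1-1=0\), so \(\Delta_C\) satisfies the condition of Theorem~\ref{thm:signed_degree}.
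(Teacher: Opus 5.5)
Your proof is correct and follows the paper's argument essentially verbatim. You regroup \(T_w(a(u)+b(v))\) as \(\sum_u d_w(u)a(u)+\sum_v d_w(v)b(v)\), read off the ``if'' direction, and for the converse test an indicator at a single image or caption with the other function set to zero.
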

	
	\begin{proof}
		For an additive score \(s(u,v)=a(u)+b(v)\),
		\[
		\begin{aligned}
			T_w(s)
			&=
			\sum_{e\in E}w_e a(u_e)+\sum_{e\in E}w_e b(v_e)\\
			&=
			\sum_u d_w(u)a(u)+\sum_v d_w(v)b(v).
		\end{aligned}
		\]
		If all signed degrees are zero, this expression is zero for all \(a,b\). Conversely, if some image \(u_0\) has \(d_w(u_0)\neq0\), choosing \(a(u_0)=1\), \(a(u)=0\) for \(u\neq u_0\), and \(b\equiv0\) makes \(T_w(a+b)=d_w(u_0)\neq0\). The same argument applies to captions.
	\end{proof}

	This theorem gives the design principle behind exact cycles. A benchmark comparison removes arbitrary component-only additive scores exactly when every component appears with the same total weight on both sides of the comparison. Thus exact marginal cancellation is a property of the signed evaluation graph for the reported comparison, not merely of having paired examples or approximately matched dataset marginals. Counterfactual cycles enforce this balance locally, within each retained block. After merging duplicate evaluations and excluding the identically zero contrast, the counterfactual pairing cycle is the smallest nondegenerate instance of this principle involving two distinct images and two distinct captions. It uses the four signed edges
	\[
	+(u_1,v_1),\quad +(u_2,v_2),\quad -(u_1,v_2),\quad -(u_2,v_1),
	\]
	so every image and every caption has signed degree zero. The resulting contrast is
	\[
	\Dcyc[,C](s)
	=
	s(u_1,v_1)+s(u_2,v_2)-s(u_1,v_2)-s(u_2,v_1).
	\]
	
	\begin{corollary}[Atomic \(2\times2\) invariant]
		\label{cor:atomic_cycle}
		On a complete \(2\times2\) block, every nonzero linear contrast that cancels all additive image and text scores is proportional to
		\[
		(1,-1,-1,1).
		\]
		Equivalently, up to scale, the cycle contrast is the unique nonzero linear statistic on the block that removes all additive component main effects.
	\end{corollary}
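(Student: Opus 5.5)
The plan is to specialize Theorem~\ref{thm:signed_degree} to the complete \(2\times2\) block and then solve the resulting linear system. On the block with distinct images \(u_1,u_2\) and distinct captions \(v_1,v_2\), the evaluated pairs are the four cells, taken after merging duplicates so each cell carries one weight. A linear contrast is therefore a weight vector
\[
w=(w_{11},w_{12},w_{21},w_{22}),
\]
where \(w_{ij}\) multiplies \(s(u_i,v_j)\), and the statistic is \(T_w(s)=\sum_{i,j}w_{ij}s(u_i,v_j)\). By Theorem~\ref{thm:signed_degree}, \(T_w\) cancels all additive scores \(a(u)+b(v)\) if and only if every signed degree vanishes:
\[
w_{11}+w_{12}=0,\quad w_{21}+w_{22}=0,\quad w_{11}+w_{21}=0,\quad w_{12}+w_{22}=0 .
\]

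Next I will solve this system. Set \(w_{11}=t\). The first row equation gives \(w_{12}=-t\), and the first column equation gives \(w_{21}=-t\). Then the second row equation gives \(w_{22}=t\). The remaining column equation \(w_{12}+w_{22}=0\) holds automatically, which reflects the single dependency that the row sums and the column sums both total \(\sum w_{ij}\). The solution space is therefore \(\{t(1,-1,-1,1):t\in\R\}\), which is one-dimensional. Any nonzero admissible \(w\) has \(t\neq0\) and is proportional to \((1,-1,-1,1)\), and the case \(t=1\) is exactly \(\Dcyc[,C]\). Conversely, \((1,-1,-1,1)\) has all signed degrees zero, so by the theorem it does cancel all additive scores. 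This gives both directions of the ``unique up to scale'' claim.

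The one point that needs care is the word \emph{linear statistic on the block}. I will read it as a linear functional of the four score values, which is why the merge-duplicates convention is needed: it lets each cell carry exactly one weight. The converse direction of Theorem~\ref{thm:signed_degree} then applies because the images and captions are distinct, so an indicator \(a\) or \(b\) can be placed on a single image or caption. I do not expect a genuine obstacle here. The argument amounts to a rank computation: the \(4\times4\) constraint matrix has rank \(3\), so its kernel has dimension \(4-3=1\).
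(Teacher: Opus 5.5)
Your proposal is correct and follows essentially the same route as the paper: apply Theorem~\ref{thm:signed_degree} to the four cell weights, write the two row and two column zero-signed-degree equations, and solve to get the one-dimensional solution space spanned by \((1,-1,-1,1)\). Your extra remarks make explicit three points the paper's short proof leaves implicit: merging duplicate cells, the distinctness of the images and captions (needed for the converse direction of the theorem), and the rank-\(3\) constraint matrix.
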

	
	\begin{proof}
		Apply Theorem~\ref{thm:signed_degree} to the four coefficients \((c_{11},c_{12},c_{21},c_{22})\). Zero signed degree at the two image vertices gives
		\[
		c_{11}+c_{12}=0,\qquad c_{21}+c_{22}=0,
		\]
		and zero signed degree at the two caption vertices gives
		\[
		c_{11}+c_{21}=0,\qquad c_{12}+c_{22}=0.
		\]
		Thus
		\[
		(c_{11},c_{12},c_{21},c_{22})
		=
		\lambda(1,-1,-1,1)
		\]
		for some \(\lambda\).
	\end{proof}
	
	\begin{corollary}[Additive scores are chance under exact cycles]
		\label{cor:additive_zero}
		For every score of the form \(s(u,v)=a(u)+b(v)\),
		\[
		\Dcyc[,C](s)=0
		\]
		on every exact cycle \(C\). Therefore \(\Acccyc(s)=1/2\) under the half-tie convention.
	\end{corollary}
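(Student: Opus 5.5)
The plan is to reduce the statement to Theorem~\ref{thm:signed_degree} applied to the four signed edges of an exact cycle, and then feed the resulting identity into the definition of \(\Acccyc\).

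\textbf{Step 1: the contrast vanishes.} Fix an exact cycle \(C\) with edges \(+(u_1,v_1)\), \(+(u_2,v_2)\), \(-(u_1,v_2)\), \(-(u_2,v_1)\). Each image \(u_i\) appears once with weight \(+1\) and once with weight \(-1\), so \(d_w(u_i)=0\). Each caption \(v_j\) also appears once with each sign, so \(d_w(v_j)=0\). Theorem~\ref{thm:signed_degree} then gives \(\Dcyc[,C](a+b)=0\) for all \(a,b\). As a sanity check, the direct expansion is
\[
a(u_1)+b(v_1)+a(u_2)+b(v_2)-a(u_1)-b(v_2)-a(u_2)-b(v_1)=0 .
\]

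\textbf{Step 2: cycle accuracy is one half.} Since \(\Dcyc[,C](s)=0\) holds on every cycle \(C\), and not merely almost surely, we have \(\Prb_C\{\Dcyc[,C](s)>0\}=0\) and \(\Prb_C\{\Dcyc[,C](s)=0\}=1\). This holds whatever the distribution of \(C\), whether uniform over the split or a population distribution. Substituting into the definition gives \(\Acccyc(s)=0+\tfrac12\cdot 1=\tfrac12\).

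\textbf{Main obstacle.} There is essentially none. The only point needing care is that the statement is in exact arithmetic. The cancellation is an algebraic identity, so the tie case occurs with probability one. With a numerical tie tolerance, floating-point residuals of size zero up to rounding are still classified as ties, so the conclusion holds for the reported metric as well.
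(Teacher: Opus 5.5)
Your proof is correct and follows the paper's route: the main text treats this corollary as the signed-degree-zero case of Theorem~\ref{thm:signed_degree}, and Appendix~\ref{app:proofs} (Proposition A.1) proves it by the same direct expansion, concluding that every cycle gets tie credit \(1/2\) and hence the mean is \(1/2\). You should drop your closing claim that the conclusion also holds for the reported metric, because that claim is not established. The tolerance \(\varepsilon=10^{-9}\) in Appendix~\ref{app:cycle-evaluation} is absolute, while rounding residuals in the computed contrast scale with the magnitude of the four block scores, so they fall inside the tolerance only when those scores are moderate; this is why the paper asserts the tie-decision invariance only in exact arithmetic.
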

	
	The guarantee is finite-sample and applies to additive component-level nuisance directions: such terms cancel exactly within each retained block. They do not rule out non-additive pair-dependent artifacts, such as object co-occurrence, template-specific interactions, or annotation-rule effects. Therefore high \(\Acccyc\) should be interpreted as evidence of sensitivity to pair-dependent information, not as a certificate of full semantic understanding of the intended relation. The same algebra also gives a useful population interpretation. A cycle is not only a finite-sample trick for canceling shortcuts; it estimates how much a score prefers the observed joint pairing over an independent recombination with the same marginals.
	
	\begin{proposition}[Cycles estimate the interaction advantage]
		\label{prop:cycle_population}
		Let \(P\) be the distribution of valid pairs, with marginals \(\mu\) and \(\nu\), and let \(R=\mu\otimes\nu\). Assume \(P\ll R\), \(dP/dR\in L^2(R)\), and \(s\in L^2(R)\). Draw two valid pairs
		\[
		(U_1,V_1),(U_2,V_2)\stackrel{\mathrm{iid}}{\sim}P
		\]
		and form the swapped cycle \(C\) without any additional filtering. Define the ANOVA interaction residual under \(R\) by
		\[
		s_{\rm int}(u,v)
		=
		s(u,v)
		-
		\E_R[s(U,V)\mid U=u]
		-
		\E_R[s(U,V)\mid V=v]
		+
		\E_R s(U,V).
		\]
		Then
		\[
		\E[\Delta_C(s)]
		=
		2\big(\E_P s(U,V)-\E_R s(U,V)\big)
		=
		2\E_P s_{\rm int}(U,V).
		\]
		Moreover,
		\[
		\Delta_C(s)=\Delta_C(s_{\rm int})
		\]
		for every realized cycle.
	\end{proposition}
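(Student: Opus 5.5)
The plan is to prove the pathwise identity first and then take expectations term by term. By construction, \(s_{\rm int}=s-a-b+c\), with \(a(u)=\E_R[s(U,V)\mid U=u]\), \(b(v)=\E_R[s(U,V)\mid V=v]\), and \(c=\E_R s\) a constant. Under \(R=\mu\otimes\nu\) these conditional expectations are simply \(a(u)=\int s(u,v)\,\nu(dv)\) and \(b(v)=\int s(u,v)\,\mu(du)\). Both are well defined and lie in \(L^2(\mu)\) and \(L^2(\nu)\) respectively, by Fubini and Jensen applied to \(s\in L^2(R)\). The function \(a(u)+b(v)-c\) is additive, since a constant can be absorbed into \(a\). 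Corollary~\ref{cor:additive_zero}, or equivalently the zero signed degrees of the four cycle edges in Theorem~\ref{thm:signed_degree}, then gives \(\Delta_C(a+b-c)=0\) for every realized cycle. This yields \(\Delta_C(s)=\Delta_C(s_{\rm int})\) immediately.

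For the expectation, I would use linearity on the four terms of \(\Delta_C(s)\). The diagonal pairs \((U_1,V_1)\) and \((U_2,V_2)\) are each distributed as \(P\), so they contribute \(2\E_P s\). For the swapped pairs, independence of the two draws means \((U_1,V_2)\) has law \(\mu\otimes\nu=R\), because \(U_1\sim\mu\) and \(V_2\sim\nu\) are independent; the same holds for \((U_2,V_1)\). The swapped terms therefore contribute \(-2\E_R s\), giving \(\E\Delta_C(s)=2(\E_P s-\E_R s)\).

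Integrability needs a short check. \(\E_R|s|<\infty\) follows from \(s\in L^2(R)\). For \(\E_P|s|=\E_R[|s|\,dP/dR]\), Cauchy--Schwarz applies because both \(s\) and \(dP/dR\) are in \(L^2(R)\). This is exactly where the hypotheses \(P\ll R\) and \(dP/dR\in L^2(R)\) are used.

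The second equality, \(\E_P s-\E_R s=\E_P s_{\rm int}\), is the step I expect to need the most care, though it is not deep. I would compute \(\E_P s_{\rm int}=\E_P s-\E_P a(U)-\E_P b(V)+c\). Since the \(U\)-marginal of \(P\) is \(\mu\), we get \(\E_P a(U)=\E_\mu a=\E_R s=c\), and likewise \(\E_P b(V)=c\). These terms combine to \(\E_P s-c\), as required. Integrability of \(a(U)\) and \(b(V)\) under \(P\) holds because they depend on only one coordinate, whose \(P\)-law equals its \(R\)-law, and \(a,b\) are in \(L^1\) of the respective marginals.

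As an alternative check, I would apply the pathwise identity together with the same expectation computation to \(s_{\rm int}\) itself. Since \(\E_R s_{\rm int}=0\), this gives \(\E\Delta_C(s)=\E\Delta_C(s_{\rm int})=2\E_P s_{\rm int}\), which confirms the result consistently.
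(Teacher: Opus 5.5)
Your proposal is correct and follows essentially the same route as the paper: the swapped pairs have law \(R=\mu\otimes\nu\) by independence, the difference \(s-s_{\rm int}\) is additive and therefore cancels blockwise, and because \(P\) and \(R\) share marginals the additive terms have equal expectations under \(P\) and \(R\). Your explicit integrability check, which applies Cauchy--Schwarz using \(dP/dR\in L^2(R)\) and \(s\in L^2(R)\), is a detail the paper leaves implicit and does not change the argument.
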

	
	\begin{proof}
		Since the two valid pairs are independent and have marginals \(\mu\) and \(\nu\), the swapped pair \((U_1,V_2)\) has distribution \(\mu\otimes\nu=R\), and so does \((U_2,V_1)\). Hence
		\[
		\E[\Delta_C(s)]
		=
		2\E_P s(U,V)-2\E_R s(U,V).
		\]
		The difference between \(s\) and \(s_{\rm int}\) is additive in \(U\) and \(V\), up to a constant. Additive terms cancel blockwise in the cycle contrast, which gives
		\[
		\Delta_C(s)=\Delta_C(s_{\rm int})
		\]
		for every realized cycle. Since \(P\) and \(R\) have the same marginals, additive terms have the same expectation under \(P\) and \(R\). Therefore
		\[
		\E_P s-\E_R s
		=
		\E_P s_{\rm int}.
		\]
		Combining the two identities gives the result.
	\end{proof}
	
	This population identity applies to unfiltered iid recombinations of valid pairs. In the exact-cycle benchmarks below, we additionally condition on the swapped pairs being invalid. That filtering preserves the blockwise additive-cancellation guarantee, because each retained block still contains the same two images and the same two captions on both sides, but it changes the population distribution of retained cycles. Therefore Proposition~\ref{prop:cycle_population} should be read as an unfiltered population interpretation rather than as an exact description of the filtered benchmark distribution. This motivates our experimental separation between coarse object-caption cycles and stricter relation cycles where both mentioned objects are present and the swapped relation is false.
	
	\section{Learning and diagnostics}
	
	\paragraph{Cycle loss.}
	The same contrast can be used for training via
	\[
	\mathcal L_{\rm cyc}(\theta;C)=\mathrm{softplus}(-\Dcyc[,C](s_\theta)).
	\]
	Because \(\Dcyc[,C](s+a(u)+b(v))=\Dcyc[,C](s)\) for every cycle, arbitrary image-only and text-only heads cannot improve this objective. This differs from a four-entry BCE over the two valid and two swapped pairs, which depends on absolute scores and can reward component shifts that carry no relational information.
	
	\paragraph{Bilinear scores.}
	Let \(\phi:\calU\to\R^{d_\phi}\) and \(\psi:\calV\to\R^{d_\psi}\) denote image and text embeddings, respectively. For a bilinear score
	\[
	s(u,v)=\phi(u)^\top M\psi(v),
	\qquad
	M\in\R^{d_\phi\times d_\psi},
	\]
	the cycle contrast factorizes as
	\[
	\Dcyc[,C](s)
	=
	[\phi(u_1)-\phi(u_2)]^\top M[\psi(v_1)-\psi(v_2)].
	\]
	Thus rank sweeps distinguish two bottlenecks: if the relevant image and text differences are present but spread across directions, increasing the rank of \(M\) can help; if the representations do not contain the needed difference, increasing rank will not recover it.
	
	\section{Constructing exact cycles}
	
	An exact cycle must satisfy two requirements: both swaps are invalid for the intended relation, and the full four-entry block is available. We therefore use conservative dataset-specific construction rules and then apply a complete-block filter. Each retained block contains two valid pairs and their two swaps, with labels \(1,1,0,0\); if any entry is missing, required annotations are unavailable, or a swap cannot be verified as invalid, the entire block is discarded. The construction principle is shared across datasets. For GQA~\cite{hudson2019gqa}, we instantiate caption-like facts from scene graphs and retain a block only when both swapped image-fact pairings are invalid under the annotation-derived predicate. For COCO object cycles, a swapped caption is invalid if it mentions a selected MSCOCO category~\cite{lin2014microsoft}, under a fixed synonym table, that is has no corresponding annotation in the target image. For COCO relation cycles, captions are generated from instance boxes; the source relation must hold under a fixed geometric margin, and in strict splits the target image must contain both mentioned categories while making the swapped relation false under the same rule. Construction annotations are used only for filtering and are not given to the detector. Because every retained block reuses the same two images and captions on both sides, the positive and negative image and caption marginals match exactly within each block and after aggregation. Caption-length marginals also match whenever caption strings are reused. Appendix~\ref{app:datasets} reports the corresponding total-variation audits.
	
	\begin{table}[t]
		\centering
		\caption{Synthetic controls. AUROC is reported for these unblocked two-sample diagnostics. The vertical tick in each mini-bar marks chance.}
		\label{tab:synthetic}
		\small
		\setlength{\tabcolsep}{3.8pt}
		\renewcommand{\arraystretch}{1.16}
		\begin{tabularx}{\linewidth}{
				@{}
				>{\raggedright\arraybackslash}p{0.18\linewidth}
				>{\centering\arraybackslash}p{0.205\linewidth}
				>{\centering\arraybackslash}p{0.205\linewidth}
				>{\raggedright\arraybackslash}X
				@{}
			}
			\toprule
			Control & Marginal score & Interaction score & Interpretation \\
			\midrule
			Pure interaction
			& \auroccell{\(0.5005\pm0.0072\)}{0.5005}{cyclegray}
			& \emptycell{--}
			& The marginal probe is at chance in this synthetic construction. \\
			Marginal shortcut
			& \auroccell{\(1.0000\pm0.0000\)}{1.0000}{cyclered}
			& \softbadge{cyclered}{shortcut}
			& The same probe becomes perfect once a marginal artifact is injected. \\
			Rank bottleneck
			& \auroccell{\(0.5023\pm0.0023\)}{0.5023}{cyclegray}
			& \auroccell{\(0.7658\pm0.0031\)}{0.7658}{cycleblue}
			& Increasing rank initially recovers interaction modes and then saturates. \\
			Binding representation
			& \auroccell{\(0.5000\)}{0.5000}{cyclegray}
			& \auroccell{\(0.9919\pm0.0006\)}{0.9919}{cycleblue}
			& Bag features discard binding, while pair-aware features recover it. \\
			\bottomrule
		\end{tabularx}
	\end{table}

	\section{Experiments}
	
	Our experiments are diagnostic. We first check that the cycle construction preserves the signed-degree balance that removes additive component-level leakage in settings where the answer is known. We then use exact cycles to examine what frozen CLIP-like representations expose: coarse image-caption compatibility, geometric relations, or strict role binding. Finally, we test whether an outlier-exposure model selected by a shortcut validation signal transfers to relational cycles.
	
	Unless otherwise stated, raw-image experiments use frozen CLIP ViT-B/32 image and text features~\cite{radford2021learning}. We evaluate image-only, text-only, and additive logistic probes as implementation audits rather than competitive baselines: by Theorem~\ref{thm:signed_degree}, exact cycles force any additive score to tie, so chance performance verifies that the retained blocks, filtering, and feature availability have preserved the required signed-degree balance. We also evaluate zero-shot CLIP cosine; diagonal and bilinear interaction probes; and low-rank bilinear scores
	\[
	s_r(u,v)=(A^\top\phi(u))^\top(B^\top\psi(v)),
	\qquad
	A\in\R^{d_\phi\times r},\quad B\in\R^{d_\psi\times r},
	\]
	where \(A\) and \(B\) are learned projection matrices for the frozen image and text embeddings. All exact-cycle results report \(\Acccyc\). AUROC is used only for unblocked synthetic diagnostics and for the shortcut validation task.
	
	\subsection{Sanity checks: the audit behaves as intended}
	
	Synthetic controls isolate the algebra from dataset details. Table~\ref{tab:synthetic} shows four checks. The pure-interaction row is used only to verify that marginal scores have chance performance when the shift is purely in the coupling; the interaction score is therefore not reported for that row. Additive probes are chance on pure interactions, but become perfect when a marginal shortcut is injected. Rank-constrained interaction probes improve from \(0.6725\) at rank \(1\) to \(0.7658\) at rank \(8\), then slightly decrease to \(0.7575\) at rank \(16\). Bag representations discard binding information that pair-aware scores recover. These controls are unblocked two-sample diagnostics, so we report AUROC. The same signed-degree balance check holds on real exact-cycle splits: image-only, text-only, and additive probes are at chance under \(\Acccyc\), as required for retained complete blocks. When we deliberately contaminate text marginals in an unblocked diagnostic, text-only and additive probes become perfect. Thus the cycle metric removes additive component plausibility as a route to success and rewards pair-dependent signal instead. 
	
	\subsection{Exact-cycle diagnostics on GQA and COCO}
	
	Table~\ref{tab:diagnostics} summarizes the main exact-cycle diagnostics. The additive column is an implementation audit: by Theorem~\ref{thm:signed_degree}, it should be chance whenever the retained blocks preserve exact signed-degree balance. It checks the construction and filtering pipeline rather than serving as a competitive baseline. For COCO rows, the interaction ranges are zero-shot CLIP cosine ranges. For GQA rows, the ranges summarize the representative trained non-additive probes: diagonal, full bilinear, and rank-\(64\) low-rank bilinear scores. Additive image-only and text-only heads are excluded from the interaction column. The GQA binding-swap family did not yield enough validation cycles under the minimum split-size filter and is therefore not reported.
	
	The results separate several regimes. CLIP-like global features are strong on coarse image-caption matching and many object-level shifts. Directional left/right binding remains close to chance in the strict COCO splits. In contrast, same-pair relation changes, above/below relations, and especially size relations retain recoverable pair-dependent signal, with the strongest signal observed for size.
	
	\subsection{Strict spatial relations}
	
	Strict COCO relation cycles remove the easiest object-presence cue. The swapped caption is not invalid because an object is missing: both object categories appear in the target image, but the stated spatial or size relation is false under the box-geometry rule. This setting is a more direct test of role binding. Figure~\ref{fig:strict_combined} summarizes the strict-relation results. The table panel, Figure~\ref{fig:strict_table}, reports the per-family \(\Acccyc\) values, and the bar-plot panel, Figure~\ref{fig:strict_bars}, highlights the main comparison. Left/right swaps remain near chance for both CLIP cosine and the rank-\(64\) bilinear probe. Above/below is weak for zero-shot CLIP but reaches \(0.659\) with the trained rank-\(64\) probe. Same-pair relation changes remain near chance for CLIP and reach \(0.599\) with rank \(64\). Size retains the strongest signal: \(0.763\) for zero-shot CLIP and \(0.787\) for the rank-\(64\) probe. Thus strict directional role binding remains difficult, while geometric size information is substantially more visible in the frozen global representation.
	
\begin{table}[t]
	\centering
	\caption{Exact-cycle diagnostics on GQA and COCO. Values are exact
		CycleAcc. Additive probes are implementation consistency checks. For COCO,
		the interaction ranges are zero-shot CLIP cosine ranges only; trained
		bilinear and low-rank bilinear probes are reported separately. For GQA,
		the interaction ranges summarize the evaluated diagonal, full-bilinear,
		and rank-64 probes. The GQA binding family is omitted because it yields
		only 18--19 validation cycles per seed, below the prespecified minimum of
		30.}
	\label{tab:diagnostics}
	\small
	\setlength{\tabcolsep}{4.8pt}
	\renewcommand{\arraystretch}{1.14}
	\begin{tabularx}{\linewidth}{
			@{}
			>{\raggedright\arraybackslash}p{0.28\linewidth}
			c
			c
			>{\raggedright\arraybackslash}X
			@{}
		}
		\toprule
		Split group & Additive audit & Interaction scores & Reading \\
		\midrule
		
		\multicolumn{4}{@{}l}{\textcolor{black!70}{\scriptsize\bfseries GQA scene-graph cycles}}\\[-0.1mm]
		Attribute / caption mismatch
		& \(0.500\pm0.000\)
		& \(0.594\)--\(0.954\)
		& Attribute coupling varies across families and can be strong. \\
		
		Relation / object swaps
		& \(0.500\pm0.000\)
		& \(0.500\)--\(0.835\)
		& Performance ranges from chance to substantial relation signal. \\
		
		\addlinespace[2pt]
		\midrule
		\addlinespace[1pt]
		
		\multicolumn{4}{@{}l}{\textcolor{black!70}{\scriptsize\bfseries COCO raw-image cycles}}\\[-0.1mm]
		Coarse coupling
		& \(0.500\pm0.000\)
		& \(0.997\)--\(1.000\)
		& Coarse image-caption matching is easy. \\
		
		Matched / neighbor hard
		& \(0.500\pm0.000\)
		& \(0.941\)--\(0.987\)
		& Coarse object signal remains strong. \\
		
		Box relation
		& \(0.500\pm0.000\)
		& \(0.903\)--\(0.945\)
		& Box-level geometry is visible. \\
		
		Strict relations
		& \(0.500\pm0.000\)
		& \(0.495\)--\(0.763\)
		& Directional relations are weak, while apparent size is visible. \\
		
		\bottomrule
	\end{tabularx}
\end{table}
	
	\begin{figure}[t]
		\centering
		\newlength{\strictpanelheight}
		\setlength{\strictpanelheight}{5.02cm}
		
		\begin{subfigure}[t]{0.43\linewidth}
			\vspace{0pt}
			\centering
			\begin{minipage}[t][\strictpanelheight][c]{\linewidth}
				\centering
				\small
				\setlength{\tabcolsep}{2.5pt}
				\renewcommand{\arraystretch}{1.28}
				\begin{tabular}{@{}lcc@{}}
					\toprule
					Family & CLIP & Rank-64 \\
					\midrule
					Flip L/R
					& \(0.511{\pm}0.012\)
					& \(0.490{\pm}0.020\) \\
					
					Present L/R
					& \(0.536{\pm}0.010\)
					& \(0.505{\pm}0.036\) \\
					
					Above/below
					& \(0.527{\pm}0.005\)
					& \(0.659{\pm}0.024\) \\
					
					Same-pair rel.
					& \(0.495{\pm}0.002\)
					& \(0.599{\pm}0.016\) \\
					
					Present size
					& \(0.763{\pm}0.010\)
					& \(\mathbf{0.787{\pm}0.010}\) \\
					\bottomrule
				\end{tabular}
			\end{minipage}
			\caption{Per-family \(\Acccyc\). Additive audits are \(0.500\pm0.000\) for all reported rows.}
			\label{fig:strict_table}
		\end{subfigure}
		\hfill
		\begin{subfigure}[t]{0.54\linewidth}
			\vspace{0pt}
			\centering
			\begin{minipage}[t][\strictpanelheight][c]{\linewidth}
				\centering
				\begin{tikzpicture}
					\begin{axis}[
						cyclebar,
						width=\linewidth,
						height=4.32cm,
						bar width=5.8pt,
						ymin=0.45,
						ymax=0.82,
						ylabel={CycleAcc},
						symbolic x coords={flip L/R,present L/R,above/below,same pair,size},
						xtick=data,
						x tick label style={rotate=25,anchor=east,font=\small},
						y tick label style={font=\small},
						label style={font=\small},
						legend style={
							at={(0.5,1.06)},
							anchor=south,
							legend columns=2,
							draw=none,
							font=\small
						},
						enlarge x limits=0.12,
						]
						\addplot[fill=cycleorange!55,draw=cycleorange!90!black] coordinates {
							(flip L/R,0.511)
							(present L/R,0.536)
							(above/below,0.527)
							(same pair,0.495)
							(size,0.763)
						};
						\addplot[fill=cyclepurple!50,draw=cyclepurple!90!black] coordinates {
							(flip L/R,0.490)
							(present L/R,0.505)
							(above/below,0.659)
							(same pair,0.599)
							(size,0.787)
						};
						\draw[black!65,densely dashed,line width=0.65pt]
						(axis cs:flip L/R,0.5) -- (axis cs:size,0.5);
						\legend{CLIP cosine,Rank-\(64\) bilinear}
					\end{axis}
				\end{tikzpicture}
			\end{minipage}
			\caption{Visualization of the same strict relation results. The dashed line marks chance.}
			\label{fig:strict_bars}
		\end{subfigure}
		
		\caption{Strict COCO relation cycles separate directional role binding from recoverable geometric cues. Left/right remains near chance, while size is strongly exposed and above/below and same-pair relation changes retain moderate trained-probe signal.}
		\label{fig:strict_combined}
	\end{figure}
	
	\begin{table}[t]
		\centering
		\caption{Low-rank bilinear probes. Values are \(\Acccyc\). Rank helps for recoverable interactions but does not rescue strict present left/right.}
		\label{tab:rank}
		\small
		\setlength{\tabcolsep}{6pt}
		\renewcommand{\arraystretch}{1.08}
		\begin{tabular}{@{}lcccc@{}}
			\toprule
			Split & Rank 1 & Rank 4 & Rank 16 & Rank 64 \\
			\midrule
			Object-matched COCO
			& \(0.784\pm0.018\)
			& \(0.905\pm0.011\)
			& \(0.936\pm0.009\)
			& \(0.959\pm0.001\) \\
			
			Text-neighbor hard
			& \(0.720\pm0.020\)
			& \(0.797\pm0.026\)
			& \(0.858\pm0.018\)
			& \(0.888\pm0.006\) \\
			
			Box left/right
			& \(0.707\pm0.029\)
			& \(0.812\pm0.005\)
			& \(0.843\pm0.005\)
			& \(0.858\pm0.019\) \\
			
			\midrule
			Strict present left/right
			& \(0.514\pm0.014\)
			& \(0.518\pm0.017\)
			& \(0.489\pm0.025\)
			& \(0.505\pm0.036\) \\
			
			Strict same-pair relation diff.
			& \(0.567\pm0.019\)
			& \(0.614\pm0.020\)
			& \(0.587\pm0.023\)
			& \(0.599\pm0.016\) \\
			
			Strict size
			& \(0.687\pm0.023\)
			& \(0.745\pm0.014\)
			& \(0.796\pm0.014\)
			& \(0.787\pm0.010\) \\
			\bottomrule
		\end{tabular}
	\end{table}
	
\subsection{Rank sweeps: bilinear recoverability}
	
	The bilinear factorization suggests a simple diagnostic. If a relation is present in the representation but spread across several directions, increasing the rank of the bilinear score should help. If the representation does not expose the needed relation, increasing rank should not help. Table~\ref{tab:rank} gives the exact sweep, and Figure~\ref{fig:rank_sweep} separates the regimes. Object-matched, text-neighbor, and box-relation splits improve substantially with rank. Strict present left/right remains near chance even at rank \(64\), indicating limited accessibility to the evaluated bilinear probes. Strict same-pair relation differences improve modestly, while strict size shows a large recoverable interaction signal.
	
	\begin{figure}[t]
		\centering
		\begin{subfigure}[t]{0.49\linewidth}
			\centering
			\begin{tikzpicture}
				\begin{axis}[
					cycleline,
					width=\linewidth,
					height=4.75cm,
					xlabel={Rank},
					ylabel={CycleAcc},
					xmode=log,
					log basis x={2},
					xtick={1,4,16,64},
					xticklabels={1,4,16,64},
					ymin=0.68,
					ymax=0.98,
					label style={font=\small},
					tick label style={font=\small},
					x tick label style={font=\small},
					y tick label style={font=\small},
					legend style={
						at={(0.5,1.14)},
						anchor=south,
						legend columns=3,
						draw=none,
						fill=none,
						font=\scriptsize,
						/tikz/every even column/.append style={column sep=1pt}
					},
					legend cell align={left},
					]
					\addplot[cycleblue,thick,mark=*] coordinates {(1,0.784) (4,0.905) (16,0.936) (64,0.959)};
					\addplot[cycleteal,thick,mark=square*] coordinates {(1,0.720) (4,0.797) (16,0.858) (64,0.888)};
					\addplot[cycleorange,thick,mark=triangle*] coordinates {(1,0.707) (4,0.812) (16,0.843) (64,0.858)};
					\legend{object matched,text neighbor,box L/R}
				\end{axis}
			\end{tikzpicture}
			\caption{Rank helps recoverable interactions.}
		\end{subfigure}
		\hfill
		\begin{subfigure}[t]{0.49\linewidth}
			\centering
			\begin{tikzpicture}
				\begin{axis}[
					cycleline,
					width=\linewidth,
					height=4.75cm,
					xlabel={Rank},
					ylabel={CycleAcc},
					xmode=log,
					log basis x={2},
					xtick={1,4,16,64},
					xticklabels={1,4,16,64},
					ymin=0.45,
					ymax=0.82,
					label style={font=\small},
					tick label style={font=\small},
					x tick label style={font=\small},
					y tick label style={font=\small},
					legend style={
						at={(0.5,1.14)},
						anchor=south,
						legend columns=3,
						draw=none,
						fill=none,
						font=\scriptsize,
						/tikz/every even column/.append style={column sep=1pt}
					},
					legend cell align={left},
					]
					\addplot[cyclepurple,thick,mark=diamond*] coordinates {(1,0.514) (4,0.518) (16,0.489) (64,0.505)};
					\addplot[cyclered,thick,mark=x] coordinates {(1,0.567) (4,0.614) (16,0.587) (64,0.599)};
					\addplot[black!70,thick,mark=o] coordinates {(1,0.687) (4,0.745) (16,0.796) (64,0.787)};
					\draw[black!65,densely dashed,line width=0.65pt]
					(axis cs:1,0.5) -- (axis cs:64,0.5);
					\legend{strict L/R,strict same-pair,strict size}
				\end{axis}
			\end{tikzpicture}
			\caption{Strict relations have heterogeneous recoverability.}
		\end{subfigure}
\caption{Rank sweeps characterize recoverability within the evaluated bilinear probe class. Some interactions improve with rank, while strict present left/right remains near chance.}
		\label{fig:rank_sweep}
	\end{figure}
	
	\subsection{Shortcut-contaminated outlier exposure}
	
	The last experiment tests the transfer risk predicted by the interaction view. Training and validation cycles are built from image-disjoint subsets of COCO \texttt{train2017}; test cycles are built from COCO \texttt{val2017}. Each exact cycle contains two images sharing an anchor object but differing in a second object. Diagonal captions describe true object pairs, and swapped captions are invalid because the variant object is absent. Both methods use the same frozen CLIP ViT-B/32 features and the same rank-\(64\) score class with additive heads:
	\[
	s_\theta(u,v)
	=
	(A^\top\phi(u))^\top(B^\top\psi(v))
	+
	\alpha^\top\phi(u)
	+
	\beta^\top\psi(v)
	+
	b,
	\]
	where \(A\in\R^{d_\phi\times64}\), \(B\in\R^{d_\psi\times64}\), \(\alpha\in\R^{d_\phi}\), \(\beta\in\R^{d_\psi}\), and \(b\in\R\) are learned parameters.
	
	The additive heads are included deliberately: ordinary OE can use component shortcuts if the training and validation signals make them predictive. Cycle OE trains on exact counterfactual cycles with
	\(\mathrm{softplus}(-\Dcyc[,C](s_\theta))\)
	and selects on held-out exact validation cycles. Shortcut-contaminated OE trains with BCE. Its positives are true image-caption pairs, and its negatives are produced by prefixing the true caption with the marker ``this is an out of distribution negative example.'' The prefix creates a clean text-marginal shortcut. OE validation uses the same marked matched-caption negatives, whereas exact-cycle test blocks contain only the original unmarked captions. This deliberately strong marker provides a controlled stress test of whether validation success on a marginal artifact transfers to relational cycles.
	
	Table~\ref{tab:shortcut_oe} shows the outcome. Shortcut OE nearly solves its own validation task, as does a text-only probe. Its exact-cycle accuracy remains substantial at \(0.8817\), but is consistently below Cycle OE at \(0.9270\). Thus shortcut validation overstates relational transfer, even though marker-contaminated BCE does not suppress all pair-dependent learning in the bilinear component. The exposed marker is sufficient for validation, but the marker-augmented captions still preserve the original caption semantics and the positive examples remain valid image-caption pairs. As a result, the learned interaction branch retains useful relational signal. A four-entry BCE baseline trained on exact cycles reaches \(0.8853\), close to the shortcut-trained model and below Cycle OE.
	
	\begin{table}[t]
		\centering
		\caption{Shortcut-contaminated COCO OE stress test. Shortcut validation uses AUROC; relational transfer uses exact-cycle \(\Acccyc\).}
		\label{tab:shortcut_oe}
		\small
		\setlength{\tabcolsep}{6pt}
		\renewcommand{\arraystretch}{1.08}
		\begin{tabular}{@{}lcc@{}}
			\toprule
			Method / audit & Shortcut val AUROC & Exact \(\Acccyc\) \\
			\midrule
			Cycle OE
			& --
			& \(\mathbf{0.9270\pm0.0075}\) \\
			
			Shortcut-contaminated OE
			& \(0.9998\pm0.0002\)
			& \(0.8817\pm0.0021\) \\
			
			\midrule
			Text-only shortcut probe
			& \(1.0000\pm0.0000\)
			& -- \\
			
			Image-only shortcut probe
			& \(0.5000\pm0.0000\)
			& -- \\
			\bottomrule
		\end{tabular}
	\end{table}
	
	\subsection{Transfer alignment}
	
	We next measure whether interaction structure learned on one exact-cycle
	family transfers to another. For each family $f$, let
	\[
	G_f
	=
	\frac{1}{|\mathcal C_f^{\mathrm{val}}|}
	\sum_{C\in\mathcal C_f^{\mathrm{val}}}
	\left[\phi(u_1)-\phi(u_2)\right]
	\left[\psi(v_1)-\psi(v_2)\right]^\top ,
	\]
	where $\mathcal C_f^{\mathrm{val}}$ contains the held-out validation cycles
	for family $f$. The alignment between two families is the normalized
	Frobenius inner product
	\[
	A_{ij}
	=
	\frac{\langle G_i,G_j\rangle_F}
	{\|G_i\|_F\|G_j\|_F}.
	\]
	
	For every ordered off-diagonal family pair $(i,j)$, we train a rank-64
	bilinear probe on family $i$ and evaluate its exact CycleAcc on the held-out
	test cycles of family $j$. Alignments are computed on validation cycles,
	while transfer performance is computed on test cycles. Correlations are
	computed within each seed, and we report the mean and sample standard
	deviation across seeds. CycleAcc compares the two complete assignments within a block. It does not
	require every directional image-to-caption or caption-to-image comparison
	to be individually correct, and it is not a certificate of full semantic
	understanding.
	
	COCO contains $182$ ordered off-diagonal family pairs per seed and gives
	\[
	\mathrm{Pearson}=0.931\pm0.006,
	\qquad
	\mathrm{Spearman}=0.876\pm0.016.
	\]
	GQA contains $42$ ordered off-diagonal pairs per seed among its available
	families and gives
	\[
	\mathrm{Pearson}=0.964\pm0.006,
	\qquad
	\mathrm{Spearman}=0.765\pm0.048.
	\]
	Thus, cycle-difference alignment is strongly associated with transfer in
	these diagnostics. Because ordered family pairs share families and trained
	probes, these correlations are descriptive rather than independent-sample
	inferential estimates.
	
	\section{Related work}
	
	Classical OOD detection uses confidence, density, reconstruction, or energy-based scores~\cite{hendrycks2017baseline,liang2018enhancing,liu2020energy,zhou2022rethinking}, while outlier exposure learns from auxiliary negatives~\cite{hendrycks2019deep}. These settings typically ask whether an example looks unlike the training distribution. Relational OOD instead asks whether two individually plausible components are paired correctly, making marginal shortcuts especially harmful: validation success from caption style, length, object frequency, or image-distribution artifacts may overestimate transfer to wrong-pairing errors. Similar shortcut issues arise in hypothesis-only and question-only baselines for NLI and VQA~\cite{gururangan2018annotation,poliak2018hypothesis,agrawal2018dont}. Work on pedestrian attribute recognition has likewise shown that label imbalance, co-occurrence bias, and feature resampling can determine whether a model learns the intended attribute rather than a correlated proxy~\cite{pmlr-v235-zhou24j,ijcai2023p203,zhou2025cooccurrence,zhou2025heterogeneous}; related analysis asks when spurious-correlation removal is learnable at all~\cite{zhou2026spurious}.
	
	Compositional vision-language benchmarks such as Winoground, ARO, and SugarCrepe use paired comparisons to test word order, binding, and relational understanding~\cite{thrush2022winoground,yuksekgonul2023when,hsieh2023sugarcrepe}. BiVLC extends this evaluation to bidirectional retrieval~\cite{miranda2024bivlc}, while CLoVe directly improves compositional language encoding in contrastive models~\cite{castro2024clove}. Our distinction is the invariant statistic used to score the comparison. Theorem~\ref{thm:signed_degree} shows that exact cancellation of image-only and text-only scores requires zero signed degree for every image and caption in the reported comparison. Counterfactual pairing cycles enforce this condition locally with a complete \(2\times2\) block and score it by the signed interaction contrast. Thus the primitive is not a pooled matched/mismatched ranking set, but a finite-sample marginally balanced test for relational OOD. The training and diagnostic uses connect to contrastive learning~\cite{oord2018representation,chen2020simple} and interaction or dependence testing~\cite{gretton2005measuring,gretton2012kernel,sejdinovic2013equivalence}, but target pair-level relational scoring under exact marginal cancellation.
	
\section{Conclusion}

Relational out-of-distribution evaluation requires evidence about the
pairing, not only evidence that each component is plausible.
Counterfactual pairing cycles provide a finite-sample block design in
which image-only, text-only, and arbitrary additive component scores
cancel exactly. The same interaction contrast yields a training objective
and a family of diagnostics for bilinear vision-language scores. Across synthetic controls, COCO, and GQA, the diagnostics separate coarse
object compatibility from stricter spatial and role-binding structure.
Coarse COCO cycles remain easy, strict left/right is weak for the
evaluated bilinear probes, and apparent size is substantially more
visible. In the marker-controlled COCO study, Cycle OE obtains
\(0.9270\pm0.0075\) exact CycleAcc, while shortcut-contaminated BCE
obtains \(0.8817\pm0.0021\). Because these protocols differ in objective,
validation criterion, and marker exposure, the comparison should not be
read as isolating a single causal factor. Exact CycleAcc measures sensitivity to pair-dependent structure within a
complete block. It does not certify full semantic understanding, and its
conclusions depend on the cycle predicates and representations being
evaluated. The public implementation enables the same audits to be applied
to additional encoders, benchmarks, and relation families.
	
	\begin{ack}
		The authors received no specific funding for this work and declare no competing interests.
	\end{ack}
	
\paragraph{Code availability.}
The official implementation is hyperlinked in the abstract.
	
	\bibliographystyle{plainnat}
	\bibliography{ref}

@inproceedings{hendrycks2017baseline,  
	title     = {A Baseline for Detecting Misclassified and Out-of-Distribution Examples in Neural Networks},  
	author    = {Hendrycks, Dan and Gimpel, Kevin},  
	booktitle = {International Conference on Learning Representations},  
	year      = {2017},  
	url       = {https://openreview.net/forum?id=Hkg4TI9xl}  
}

@inproceedings{liang2018enhancing,  
	title     = {Enhancing The Reliability of Out-of-distribution Image Detection in Neural Networks},  
	author    = {Liang, Shiyu and Li, Yixuan and Srikant, R.},  
	booktitle = {International Conference on Learning Representations},  
	year      = {2018},  
	url       = {https://openreview.net/forum?id=H1VGkIxRZ}  
}

@inproceedings{liu2020energy,  
	title     = {Energy-based Out-of-distribution Detection},  
	author    = {Liu, Weitang and Wang, Xiaoyun and Owens, John D. and Li, Yixuan},  
	booktitle = {Advances in Neural Information Processing Systems},  
	volume    = {33},  
	pages     = {21464--21475},  
	year      = {2020},  
	url       = {https://proceedings.neurips.cc/paper/2020/hash/f5496252609c43eb8a3d147ab9b9c006-Abstract.html}  
}

@inproceedings{hendrycks2019deep,  
	title     = {Deep Anomaly Detection with Outlier Exposure},  
	author    = {Hendrycks, Dan and Mazeika, Mantas and Dietterich, Thomas},  
	booktitle = {International Conference on Learning Representations},  
	year      = {2019},  
	url       = {https://openreview.net/forum?id=HyxCxhRcY7}  
}

@misc{oord2018representation,  
	title         = {Representation Learning with Contrastive Predictive Coding},  
	author        = {{van den Oord}, Aaron and Li, Yazhe and Vinyals, Oriol},  
	year          = {2018},  
	eprint        = {1807.03748},  
	archivePrefix = {arXiv},  
	primaryClass  = {cs.LG},  
	doi           = {10.48550/arXiv.1807.03748},  
	url           = {https://arxiv.org/abs/1807.03748}  
}

@inproceedings{chen2020simple,  
	title     = {A Simple Framework for Contrastive Learning of Visual Representations},  
	author    = {Chen, Ting and Kornblith, Simon and Norouzi, Mohammad and Hinton, Geoffrey},  
	booktitle = {Proceedings of the 37th International Conference on Machine Learning},  
	series    = {Proceedings of Machine Learning Research},  
	volume    = {119},  
	pages     = {1597--1607},  
	publisher = {PMLR},  
	year      = {2020},  
	url       = {https://proceedings.mlr.press/v119/chen20j.html}  
}

@inproceedings{gururangan2018annotation,  
	title     = {Annotation Artifacts in Natural Language Inference Data},  
	author    = {Gururangan, Suchin and Swayamdipta, Swabha and Levy, Omer and Schwartz, Roy and Bowman, Samuel R. and Smith, Noah A.},  
	booktitle = {Proceedings of the 2018 Conference of the North American Chapter of the Association for Computational Linguistics: Human Language Technologies, Volume 2 (Short Papers)},  
	pages     = {107--112},  
	publisher = {Association for Computational Linguistics},  
	address   = {New Orleans, Louisiana},  
	month     = jun,  
	year      = {2018},  
	doi       = {10.18653/v1/N18-2017},  
	url       = {https://aclanthology.org/N18-2017/}  
}

@inproceedings{poliak2018hypothesis,  
	title     = {Hypothesis Only Baselines in Natural Language Inference},  
	author    = {Poliak, Adam and Naradowsky, Jason and Haldar, Aparajita and Rudinger, Rachel and {Van Durme}, Benjamin},  
	booktitle = {Proceedings of the Seventh Joint Conference on Lexical and Computational Semantics},  
	pages     = {180--191},  
	publisher = {Association for Computational Linguistics},  
	address   = {New Orleans, Louisiana},  
	month     = jun,  
	year      = {2018},  
	doi       = {10.18653/v1/S18-2023},  
	url       = {https://aclanthology.org/S18-2023/}  
}

@inproceedings{agrawal2018dont,  
	title     = {Don't Just Assume; Look and Answer: Overcoming Priors for Visual Question Answering},  
	author    = {Agrawal, Aishwarya and Batra, Dhruv and Parikh, Devi and Kembhavi, Aniruddha},  
	booktitle = {Proceedings of the IEEE Conference on Computer Vision and Pattern Recognition},  
	pages     = {4971--4980},  
	year      = {2018},  
	url       = {https://openaccess.thecvf.com/content_cvpr_2018/html/Agrawal_Dont_Just_Assume_CVPR_2018_paper.html}  
}

@inproceedings{thrush2022winoground,  
	title     = {Winoground: Probing Vision and Language Models for Visio-Linguistic Compositionality},  
	author    = {Thrush, Tristan and Jiang, Ryan and Bartolo, Max and Singh, Amanpreet and Williams, Adina and Kiela, Douwe and Ross, Candace},  
	booktitle = {Proceedings of the IEEE/CVF Conference on Computer Vision and Pattern Recognition},  
	pages     = {5238--5248},  
	year      = {2022},  
	url       = {https://openaccess.thecvf.com/content/CVPR2022/html/Thrush_Winoground_Probing_Vision_and_Language_Models_for_Visio-Linguistic_Compositionality_CVPR_2022_paper.html}  
}

@inproceedings{yuksekgonul2023when,  
	title     = {When and Why Vision-Language Models Behave like Bags-of-Words, and What to Do About It?},  
	author    = {Y{\"u}ksekg{\"o}n{\"u}l, Mert and Bianchi, Federico and Kalluri, Pratyusha and Jurafsky, Dan and Zou, James},  
	booktitle = {International Conference on Learning Representations},  
	year      = {2023},  
	url       = {https://openreview.net/forum?id=KRLUvxh8uaX}  
}

@inproceedings{hsieh2023sugarcrepe,  
	title     = {{SugarCrepe}: Fixing Hackable Benchmarks for Vision-Language Compositionality},  
	author    = {Hsieh, Cheng-Yu and Zhang, Jieyu and Ma, Zixian and Kembhavi, Aniruddha and Krishna, Ranjay},  
	booktitle = {Advances in Neural Information Processing Systems},  
	volume    = {36},  
	pages     = {31096--31116},  
	year      = {2023},  
	doi       = {10.52202/075280-1355},  
	url       = {https://proceedings.neurips.cc/paper_files/paper/2023/hash/63461de0b4cb760fc498e85b18a7fe81-Abstract-Datasets_and_Benchmarks.html}  
}

@inproceedings{hudson2019gqa,  
	title     = {{GQA}: A New Dataset for Real-World Visual Reasoning and Compositional Question Answering},  
	author    = {Hudson, Drew A. and Manning, Christopher D.},  
	booktitle = {Proceedings of the IEEE/CVF Conference on Computer Vision and Pattern Recognition},  
	pages     = {6700--6709},  
	year      = {2019},  
	url       = {https://openaccess.thecvf.com/content_CVPR_2019/html/Hudson_GQA_A_New_Dataset_for_Real-World_Visual_Reasoning_and_Compositional_CVPR_2019_paper.html}  
}

@inproceedings{radford2021learning,  
	title     = {Learning Transferable Visual Models From Natural Language Supervision},  
	author    = {Radford, Alec and Kim, Jong Wook and Hallacy, Chris and Ramesh, Aditya and Goh, Gabriel and Agarwal, Sandhini and Sastry, Girish and Askell, Amanda and Mishkin, Pamela and Clark, Jack and Krueger, Gretchen and Sutskever, Ilya},  
	booktitle = {Proceedings of the 38th International Conference on Machine Learning},  
	series    = {Proceedings of Machine Learning Research},  
	volume    = {139},  
	pages     = {8748--8763},  
	publisher = {PMLR},  
	year      = {2021},  
	url       = {https://proceedings.mlr.press/v139/radford21a.html}  
}

@inproceedings{gretton2005measuring,  
	title     = {Measuring Statistical Dependence with Hilbert-Schmidt Norms},  
	author    = {Gretton, Arthur and Bousquet, Olivier and Smola, Alex and Sch{\"o}lkopf, Bernhard},  
	booktitle = {Algorithmic Learning Theory},  
	series    = {Lecture Notes in Computer Science},  
	volume    = {3734},  
	pages     = {63--77},  
	publisher = {Springer},  
	year      = {2005},  
	doi       = {10.1007/11564089_7},  
	url       = {https://doi.org/10.1007/11564089_7}  
}

@article{gretton2012kernel,  
	title   = {A Kernel Two-Sample Test},  
	author  = {Gretton, Arthur and Borgwardt, Karsten M. and Rasch, Malte J. and Sch{\"o}lkopf, Bernhard and Smola, Alexander},  
	journal = {Journal of Machine Learning Research},  
	volume  = {13},  
	number  = {25},  
	pages   = {723--773},  
	year    = {2012},  
	url     = {https://www.jmlr.org/papers/v13/gretton12a.html}  
}

@article{sejdinovic2013equivalence,  
	title   = {Equivalence of Distance-Based and {RKHS}-Based Statistics in Hypothesis Testing},  
	author  = {Sejdinovic, Dino and Sriperumbudur, Bharath K. and Gretton, Arthur and Fukumizu, Kenji},  
	journal = {The Annals of Statistics},  
	volume  = {41},  
	number  = {5},  
	pages   = {2263--2291},  
	year    = {2013},  
	doi     = {10.1214/13-AOS1140},  
	url     = {https://doi.org/10.1214/13-AOS1140}  
}

@inproceedings{chen2024mmstar,  
	title     = {Are We on the Right Way for Evaluating Large Vision-Language Models?},  
	author    = {Chen, Lin and Li, Jinsong and Dong, Xiaoyi and Zhang, Pan and Zang, Yuhang and Chen, Zehui and Duan, Haodong and Wang, Jiaqi and Qiao, Yu and Lin, Dahua and Zhao, Feng},  
	booktitle = {Advances in Neural Information Processing Systems},  
	volume    = {37},  
	pages     = {27056--27087},  
	year      = {2024},  
	doi       = {10.52202/079017-0850},  
	url       = {https://proceedings.neurips.cc/paper_files/paper/2024/hash/2f8ee6a3d766b426d2618e555b5aeb39-Abstract-Conference.html}  
}

@inproceedings{jiang2024naturalbench,  
	title     = {{NaturalBench}: Evaluating Vision-Language Models on Natural Adversarial Samples},  
	author    = {Li, Baiqi and Lin, Zhiqiu and Peng, Wenxuan and Nyandwi, Jean de Dieu and Jiang, Daniel and Ma, Zixian and Khanuja, Simran and Krishna, Ranjay and Neubig, Graham and Ramanan, Deva},  
	booktitle = {Advances in Neural Information Processing Systems},  
	volume    = {37},  
	pages     = {17044--17068},  
	year      = {2024},  
	doi       = {10.52202/079017-0542},  
	url       = {https://proceedings.neurips.cc/paper_files/paper/2024/hash/1e69ff56d0ebff0752ff29caaddc25dd-Abstract-Datasets_and_Benchmarks_Track.html}  
}

@misc{hua2024mmcomposition,  
	title         = {{MMCOMPOSITION}: Revisiting the Compositionality of Pre-Trained Vision-Language Models},  
	author        = {Hua, Hang and Tang, Yunlong and Zeng, Ziyun and Cao, Liangliang and Yang, Zhengyuan and He, Hangfeng and Xu, Chenliang and Luo, Jiebo},  
	year          = {2024},  
	eprint        = {2410.09733},  
	archivePrefix = {arXiv},  
	primaryClass  = {cs.CV},  
	doi           = {10.48550/arXiv.2410.09733},  
	url           = {https://arxiv.org/abs/2410.09733}  
}

@inproceedings{kamath2024hard,  
	title     = {The Hard Positive Truth About Vision-Language Compositionality},  
	author    = {Kamath, Amita and Hsieh, Cheng-Yu and Chang, Kai-Wei and Krishna, Ranjay},  
	booktitle = {Computer Vision -- ECCV 2024},  
	series    = {Lecture Notes in Computer Science},  
	volume    = {15072},  
	pages     = {37--54},  
	publisher = {Springer},  
	year      = {2024},  
	doi       = {10.1007/978-3-031-72630-9_3},  
	url       = {https://doi.org/10.1007/978-3-031-72630-9_3}  
}

@inproceedings{miranda2024bivlc,  
	title     = {{BiVLC}: Extending Vision-Language Compositionality Evaluation with Text-to-Image Retrieval},  
	author    = {Miranda, Imanol and Salaberria, Ander and Agirre, Eneko and Azkune, Gorka},  
	booktitle = {Advances in Neural Information Processing Systems},  
	volume    = {37},  
	pages     = {101880--101904},  
	year      = {2024},  
	doi       = {10.52202/079017-3232},  
	url       = {https://proceedings.neurips.cc/paper_files/paper/2024/hash/b8b93c48f5bfa385d071342089d70422-Abstract-Datasets_and_Benchmarks_Track.html}  
}

@misc{castro2024clove,  
	title         = {{CLoVe}: Encoding Compositional Language in Contrastive Vision-Language Models},  
	author        = {Castro, Santiago and Ziai, Amir and Saluja, Avneesh and Yuan, Zhuoning and Mihalcea, Rada},  
	year          = {2024},  
	eprint        = {2402.15021},  
	archivePrefix = {arXiv},  
	primaryClass  = {cs.CV},  
	doi           = {10.48550/arXiv.2402.15021},  
	url           = {https://arxiv.org/abs/2402.15021}  
}

@inproceedings{chen2024spatialvlm,  
	title     = {{SpatialVLM}: Endowing Vision-Language Models with Spatial Reasoning Capabilities},  
	author    = {Chen, Boyuan and Xu, Zhuo and Kirmani, Sean and Ichter, Brian and Sadigh, Dorsa and Guibas, Leonidas and Xia, Fei},  
	booktitle = {Proceedings of the IEEE/CVF Conference on Computer Vision and Pattern Recognition},  
	pages     = {14455--14465},  
	year      = {2024},  
	doi       = {10.1109/CVPR52733.2024.01370},  
	url       = {https://openaccess.thecvf.com/content/CVPR2024/html/Chen_SpatialVLM_Endowing_Vision-Language_Models_with_Spatial_Reasoning_Capabilities_CVPR_2024_paper.html}  
}

@inproceedings{jia2025omnispatial, 
	title     = {{OmniSpatial}: Towards Comprehensive Spatial Reasoning Benchmark for Vision Language Models},  
	author    = {Jia, Mengdi and Qi, Zekun and Zhang, Shaochen and Zhang, Wenyao and Yu, XinQiang and He, Jiawei and Wang, He and Yi, Li},  
	booktitle = {International Conference on Learning Representations},  
	year      = {2026},  
	url       = {https://openreview.net/forum?id=6nZKT2rL0H}  
}

@misc{stogiannidis2025mind,  
	title         = {Mind the Gap: Benchmarking Spatial Reasoning in Vision-Language Models},  
	author        = {Stogiannidis, Ilias and McDonagh, Steven and Tsaftaris, Sotirios A.},  
	year          = {2025},  
	eprint        = {2503.19707},  
	archivePrefix = {arXiv},  
	primaryClass  = {cs.CV},  
	doi           = {10.48550/arXiv.2503.19707},  
	url           = {https://arxiv.org/abs/2503.19707}  
}

@inproceedings{schneider2024m5,  
	title     = {{M5} -- A Diverse Benchmark to Assess the Performance of Large Multimodal Models Across Multilingual and Multicultural Vision-Language Tasks},  
	author    = {Schneider, Florian and Sitaram, Sunayana},  
	booktitle = {Findings of the Association for Computational Linguistics: EMNLP 2024},  
	pages     = {4309--4345},  
	publisher = {Association for Computational Linguistics},  
	address   = {Miami, Florida, USA},  
	month     = nov,  
	year      = {2024},  
	doi       = {10.18653/v1/2024.findings-emnlp.250},  
	url       = {https://aclanthology.org/2024.findings-emnlp.250/}  
}

@inproceedings{dong2024multiood,  
	title     = {{MultiOOD}: Scaling Out-of-Distribution Detection for Multiple Modalities},  
	author    = {Dong, Hao and Zhao, Yue and Chatzi, Eleni and Fink, Olga},  
	booktitle = {Advances in Neural Information Processing Systems},  
	volume    = {37},  
	pages     = {129250--129278},  
	year      = {2024},  
	doi       = {10.52202/079017-4106},  
	url       = {https://proceedings.neurips.cc/paper_files/paper/2024/hash/e969483cbae5a24df038fb5847f5b8b0-Abstract-Conference.html}  
}

@inproceedings{jiang2024neglabel,  
	title     = {Negative Label Guided {OOD} Detection with Pretrained Vision-Language Models},  
	author    = {Jiang, Xue and Liu, Feng and Fang, Zhen and Chen, Hong and Liu, Tongliang and Zheng, Feng and Han, Bo},  
	booktitle = {International Conference on Learning Representations},  
	year      = {2024},  
	url       = {https://openreview.net/forum?id=xUO1HXz4an}  
}

@inproceedings{zhang2024dualpattern,  
	title     = {Vision-Language Dual-Pattern Matching for Out-of-Distribution Detection},  
	author    = {Zhang, Zihan and Xu, Zhuo and Xiang, Xiang},  
	booktitle = {Computer Vision -- ECCV 2024},  
	series    = {Lecture Notes in Computer Science},  
	volume    = {15143},  
	pages     = {273--291},  
	publisher = {Springer},  
	year      = {2024},  
	doi       = {10.1007/978-3-031-73013-9_16},  
	url       = {https://doi.org/10.1007/978-3-031-73013-9_16}  
}

@inproceedings{yu2024selfcalibrated,  
	title     = {Self-Calibrated Tuning of Vision-Language Models for Out-of-Distribution Detection},  
	author    = {Yu, Geng and Zhu, Jianing and Yao, Jiangchao and Han, Bo},  
	booktitle = {Advances in Neural Information Processing Systems},  
	volume    = {37},  
	pages     = {56322--56348},  
	year      = {2024},  
	doi       = {10.52202/079017-1793},  
	url       = {https://proceedings.neurips.cc/paper_files/paper/2024/hash/666e5e1df2d04dbe2b545ea3a3e3f7d3-Abstract-Conference.html}  
}

@inproceedings{lin2014microsoft,  
	author    = {Lin, Tsung-Yi and Maire, Michael and Belongie, Serge and Hays, James and Perona, Pietro and Ramanan, Deva and Doll{\'a}r, Piotr and Zitnick, C. Lawrence},  
	editor    = {Fleet, David and Pajdla, Tomas and Schiele, Bernt and Tuytelaars, Tinne},  
	title     = {Microsoft {COCO}: Common Objects in Context},  
	booktitle = {Computer Vision -- ECCV 2014},  
	series    = {Lecture Notes in Computer Science},  
	volume    = {8693},  
	pages     = {740--755},  
	publisher = {Springer International Publishing},  
	address   = {Cham},  
	year      = {2014},  
	doi       = {10.1007/978-3-319-10602-1_48},  
	isbn      = {978-3-319-10602-1},  
	url       = {https://link.springer.com/chapter/10.1007/978-3-319-10602-1_48}  
}

@inproceedings{zhou2022rethinking,
	author    = {Zhou, Yibo},
	title     = {Rethinking Reconstruction Autoencoder-Based Out-of-Distribution Detection},
	booktitle = {Proceedings of the IEEE/CVF Conference on Computer Vision and Pattern Recognition (CVPR)},
	pages     = {7379--7387},
	year      = {2022},
	doi       = {10.1109/CVPR52688.2022.00723},
	url       = {https://doi.org/10.1109/CVPR52688.2022.00723}
}

@inproceedings{pmlr-v235-zhou24j,
	title     = {Pedestrian Attribute Recognition as Label-balanced Multi-label Learning},
	author    = {Zhou, Yibo and Hu, Hai-Miao and Xiang, Yirong and Zhang, Xiaokang and Wu, Haotian},
	booktitle = {Proceedings of the 41st International Conference on Machine Learning},
	pages     = {61964--61978},
	year      = {2024},
	editor    = {Salakhutdinov, Ruslan and Kolter, Zico and Heller, Katherine and
	Weller, Adrian and Oliver, Nuria and Scarlett, Jonathan and
	Berkenkamp, Felix},
	volume    = {235},
	series    = {Proceedings of Machine Learning Research},
	month     = {21--27 Jul},
	publisher = {PMLR},
	url       = {https://proceedings.mlr.press/v235/zhou24j.html}
}

@inproceedings{ijcai2023p203,
	title     = {A Solution to Co-occurence Bias: Attributes Disentanglement via
	Mutual Information Minimization for Pedestrian Attribute Recognition},
	author    = {Zhou, Yibo and Hu, Hai-Miao and Yu, Jinzuo and Xu, Zhenbo and
	Lu, Weiqing and Cao, Yuran},
	booktitle = {Proceedings of the Thirty-Second International Joint Conference on
	Artificial Intelligence, {IJCAI-23}},
	publisher = {International Joint Conferences on Artificial Intelligence Organization},
	editor    = {Edith Elkind},
	pages     = {1831--1839},
	year      = {2023},
	month     = {8},
	note      = {Main Track},
	doi       = {10.24963/ijcai.2023/203},
	url       = {https://doi.org/10.24963/ijcai.2023/203}
}

@article{zhou2025cooccurrence,
	author  = {Zhou, Yibo and Hu, Hai-Miao and Yu, Jinzuo and Wu, Haotian and
	Pu, Shiliang and Wang, Hanzi},
	title   = {A Solution to Co-occurrence Bias in Pedestrian Attribute Recognition:
	Theory, Algorithms, and Improvements},
	journal = {International Journal of Computer Vision},
	volume  = {133},
	number  = {7},
	pages   = {4712--4726},
	year    = {2025},
	doi     = {10.1007/s11263-025-02405-7},
	url     = {https://doi.org/10.1007/s11263-025-02405-7}
}

@article{zhou2025heterogeneous,
	author  = {Zhou, Yibo and Li, Bo and Hu, Hai-Miao and Zhang, Xiaokang and
	Zhang, Dongping and Wang, Hanzi},
	title   = {Heterogeneous Feature Re-Sampling for Balanced Pedestrian Attribute Recognition},
	journal = {IEEE Transactions on Pattern Analysis and Machine Intelligence},
	volume  = {47},
	number  = {4},
	pages   = {2706--2722},
	year    = {2025},
	doi     = {10.1109/TPAMI.2025.3526930},
	url     = {https://doi.org/10.1109/TPAMI.2025.3526930}
}

@misc{zhou2026spurious,
	author        = {Zhou, Yibo and Li, Bo and Hu, Hai-Miao and Wang, Hanzi and
	Zhang, Xiaokang and Zhang, Ruifan},
	title         = {Is Spurious Correlation Removal Always Learnable?},
	year          = {2026},
	eprint        = {2606.12930},
	archivePrefix = {arXiv},
	primaryClass  = {cs.LG},
	doi           = {10.48550/arXiv.2606.12930},
	url           = {https://arxiv.org/abs/2606.12930}
}
	
	\clearpage
	\appendix
	
	\section{Proofs of the cycle-test properties}
	\label{app:proofs}
	
	\subsection{Notation}
	
	Let
	\[
	c=(u_1,v_1,u_2,v_2)
	\]
	denote a counterfactual pairing cycle whose matched pairs are
	\((u_1,v_1)\) and \((u_2,v_2)\), and whose cross-pairs are
	\((u_1,v_2)\) and \((u_2,v_1)\). For a compatibility score
	\(s(u,v)\), write
	\[
	s_{ij}=s(u_i,v_j),\qquad i,j\in\{1,2\}.
	\]
	The cycle contrast and exact cycle score are
	\[
	\Dcyc[,C](s)=s_{11}+s_{22}-s_{12}-s_{21},
	\]
	and
	\[
	q(\Delta)=
	\begin{cases}
		1, & \Delta>0,\\
		\frac12, & \Delta=0,\\
		0, & \Delta<0.
	\end{cases}
	\]
	The empirical exact CycleAcc is
	\[
	\widehat{\mathrm{CycleAcc}}(s)
	=
	\frac{1}{|\mathcal C|}
	\sum_{c\in\mathcal C}
	q\!\left(\Dcyc[,C](s)\right).
	\]
These formulas define the ideal exact-arithmetic metric. Reported results
use the fixed absolute numerical tie tolerance specified in
Appendix~\ref{app:cycle-evaluation}, with half credit assigned to numerical
ties.
	
	\subsection{The cycle contrast is the atomic interaction contrast}
	
	For every cycle, the four coefficients in
	\(\Delta_C\) are
	\[
	(1,-1,-1,1)
	\]
	on the ordered entries
	\((s_{11},s_{12},s_{21},s_{22})\). Thus, the contrast compares the two
	matched pairs with the two cross-pairs while using every candidate exactly
	once on each side.
	
	Equivalently, define the incidence vector
	\[
	z_c=(e_{u_1}-e_{u_2})(e_{v_1}-e_{v_2})^\top .
	\]
	For a finite score matrix \(S\) with entries \(S_{uv}=s(u,v)\),
	\[
	\Dcyc[,C](s)=\langle S,z_c\rangle_F .
	\]
	Every row and every column of \(z_c\) sums to zero. Consequently, the cycle
	contrast removes all row-wise and column-wise main effects and retains only
	the interaction component of the score matrix.
	
	\subsection{Additive and unimodal scores lie in the cycle null space}
	
	\paragraph{Proposition A.1.}
	If
	\[
	s(u,v)=a(u)+b(v)
	\]
	for arbitrary finite functions \(a\) and \(b\), then
	\[
	\Dcyc[,C](s)=0
	\]
	for every cycle \(c\), and therefore the exact CycleAcc of \(s\) is exactly
	\(1/2\) on every nonempty cycle set.
	
	\paragraph{Proof.}
	Expanding the cycle contrast gives
	\begin{align}
		\Dcyc[,C](s)
		&=
		a(u_1)+b(v_1)+a(u_2)+b(v_2)\\
		&\quad
		-a(u_1)-b(v_2)-a(u_2)-b(v_1)\\
		&=0.
	\end{align}
	Hence every cycle receives the exact tie score \(1/2\), and the empirical
	mean is \(1/2\).
	\hfill\(\square\)
	
	Two important special cases follow immediately:
	\[
	s(u,v)=a(u)
	\qquad\text{and}\qquad
	s(u,v)=b(v).
	\]
	Thus, an image-only score and a text-only score also have exact CycleAcc
	\(1/2\), independently of how accurately either score describes the marginal
	features. This is the finite-sample additive invariance used throughout the
	paper.
	
	The same calculation explains the zero-signed-degree property. Each value
	\(a(u_i)\) occurs once with coefficient \(+1\) and once with coefficient
	\(-1\), and the same holds for each value \(b(v_i)\). The signed
	contribution of every marginal component is therefore zero.
	
	\subsection{Bilinear cycle factorization}
	
	Consider a rank-\(r\) bilinear score
	\[
	s_r(u,v)
	=
	\left(A^\top\phi(u)\right)^\top
	\left(B^\top\psi(v)\right),
	\]
	where \(\phi(u)\in\mathbb R^{d_u}\), \(\psi(v)\in\mathbb R^{d_v}\),
	\(A\in\mathbb R^{d_u\times r}\), and
	\(B\in\mathbb R^{d_v\times r}\). Define
	\[
	M=AB^\top\in\mathbb R^{d_u\times d_v}.
	\]
	
	\paragraph{Proposition A.2.}
	For every cycle,
	\[
	\Dcyc[,C](s_r)
	=
	\left[\phi(u_1)-\phi(u_2)\right]^\top
	M
	\left[\psi(v_1)-\psi(v_2)\right].
	\]
	
	\paragraph{Proof.}
	Since
	\[
	s_r(u,v)=\phi(u)^\top M\psi(v),
	\]
	linearity gives
	\begin{align}
		\Dcyc[,C](s_r)
		={}&
		\phi(u_1)^\top M\psi(v_1)
		+\phi(u_2)^\top M\psi(v_2)\\
		&-\phi(u_1)^\top M\psi(v_2)
		-\phi(u_2)^\top M\psi(v_1)\\
		={}&
		\left[\phi(u_1)-\phi(u_2)\right]^\top
		M
		\left[\psi(v_1)-\psi(v_2)\right].
	\end{align}
	\hfill\(\square\)
	
	This identity shows that the cycle test evaluates whether the learned
	bilinear operator couples the \emph{change} in one modality with the
	corresponding \emph{change} in the other modality. A model can have strong
	unimodal features while assigning little or no weight to the relevant
	cross-modal difference direction.
	
	\subsection{Exact marginal audits}
	
	Exact marginal balance follows directly from component reuse and does not
	require the two components within a block to be exchangeable. For a finite
	collection of $N$ retained blocks, define the empirical image measure on
	the matched side by
	\[
	\widehat\mu_{+}^{U}
	=
	\frac{1}{2N}
	\sum_{C\in\mathcal C}
	\left(
	\delta_{u_1}+\delta_{u_2}
	\right).
	\]
	The two swapped pairs use exactly the same two images, so
	\[
	\widehat\mu_{-}^{U}
	=
	\frac{1}{2N}
	\sum_{C\in\mathcal C}
	\left(
	\delta_{u_1}+\delta_{u_2}
	\right)
	=
	\widehat\mu_{+}^{U}.
	\]
	The same identity holds for captions:
	\[
	\widehat\mu_{+}^{V}
	=
	\frac{1}{2N}
	\sum_{C\in\mathcal C}
	\left(
	\delta_{v_1}+\delta_{v_2}
	\right)
	=
	\widehat\mu_{-}^{V}.
	\]
	Consequently, the empirical matched and swapped component measures are
	identical, and their total variation distance is exactly zero.
	
	The additive image-only, text-only, and additive probes provide an
	implementation consistency check. The direct implementation audits are the
	component-marginal TV computations and the predicate-validity checks.
	
	\section{Interaction-space view}
	\label{app:interaction-space}
	
	Let \(\mathcal U=\{u_1,\ldots,u_m\}\) and
	\(\mathcal V=\{v_1,\ldots,v_n\}\) be finite candidate sets, and let
	\(S\in\mathbb R^{m\times n}\) be their score matrix. For every pair
	\(u_i\neq u_j\) and \(v_k\neq v_\ell\), define
	\[
	Z_{ij,k\ell}
	=
	(e_i-e_j)(e_k-e_\ell)^\top .
	\]
	Then
	\[
	\langle S,Z_{ij,k\ell}\rangle_F
	=
	S_{ik}+S_{j\ell}-S_{i\ell}-S_{jk},
	\]
	which is exactly the cycle contrast for the corresponding four pairs.
	
	Every \(Z_{ij,k\ell}\) lies in
	\[
	\mathcal I
	=
	\left\{
	X\in\mathbb R^{m\times n}:
	X\mathbf 1_n=0,\;
	\mathbf 1_m^\top X=0
	\right\}.
	\]
	Conversely, the anchored matrices
	\[
	(e_i-e_1)(e_\ell-e_1)^\top,
	\qquad
	2\le i\le m,\quad 2\le \ell\le n,
	\]
	are linearly independent and span \(\mathcal I\). Therefore,
	\[
	\operatorname{span}
	\left\{
	Z_{ij,k\ell}
	\right\}
	=
	\mathcal I,
	\qquad
	\dim(\mathcal I)=(m-1)(n-1).
	\]
	
	The additive score matrices form
	\[
	\mathcal A
	=
	\left\{
	a\mathbf 1_n^\top+\mathbf 1_m b^\top:
	a\in\mathbb R^m,\; b\in\mathbb R^n
	\right\}.
	\]
	For every \(X\in\mathcal I\) and \(A\in\mathcal A\),
	\[
	\langle A,X\rangle_F=0.
	\]
	Hence the collection of all cycle contrasts is exactly the natural
	finite-dimensional interaction test obtained after quotienting out
	candidate-specific main effects.
	
\section{Low-rank bilinear accessibility}
	\label{app:spectral}
	
	For the rank-\(r\) bilinear score, write
	\[
	M=AB^\top=\sum_{i=1}^{r} a_i b_i^\top.
	\]
	The cycle factorization becomes
	\[
	\Dcyc[,C](s_r)
	=
	\sum_{i=1}^{r}
	\left[
	\phi(u_1)-\phi(u_2)
	\right]^\top a_i
	\,
	b_i^\top
	\left[
	\psi(v_1)-\psi(v_2)
	\right].
	\]
	The model can express only those cross-modal difference interactions that
	are captured by its \(r\) latent coupling directions. If a relevant image
	difference lies in the null space of \(A^\top\), or if the corresponding
	text difference lies in the null space of \(B^\top\), then its cycle
	contrast vanishes.
	
	Increasing rank can therefore recover interactions that are representable
	by additional latent directions, but rank alone does not guarantee that
	optimization will allocate capacity to a difficult relation. The synthetic
	rank control shows this partial-recovery pattern: performance improves as
	rank increases, reaches
	\[
	0.7658\pm0.0031
	\]
	at rank \(8\), and then saturates or decreases slightly instead of
	returning to the near-ceiling performance of directly provided interaction
	features.
	
	The same distinction is visible in the COCO relation probes. Additional
	rank improves recoverable relation structure, but strict left/right remains
	near chance in the evaluated probes. Strict size, by contrast, is strongly
	visible: the frozen-feature diagnostic reaches \(0.763\), and the
	separately evaluated rank-64 relation probe reaches \(0.787\). Rank can
	therefore expose some relation information while leaving other strict
	relations weak.
	
	\section{Dataset construction and exact audits}
	\label{app:datasets}
	
	\subsection{Cycle-construction principle}
	
	Each validation example is a four-pair cycle
	\[
	(u_1,v_1),\quad
	(u_2,v_2),\quad
	(u_1,v_2),\quad
	(u_2,v_1).
	\]
	The first two pairs are valid matched pairs under the relevant dataset
	predicate. The last two pairs are obtained by exchanging the two
	candidates and are invalid under the same predicate. Candidate strata are
	constructed so that the exchange does not create an explicit marginal
	identity difference between matched and cross-paired candidates.
	
	For every family, the builder applies three checks:
	\begin{enumerate}
		\item \textbf{Matched validity.} Both matched pairs satisfy the family
		predicate.
		\item \textbf{Cross-pair invalidity.} Both exchanged pairs violate the
		family predicate.
		\item \textbf{Marginal balance.} The exchanged candidates are drawn
		from the same candidate stratum as the matched candidates.
	\end{enumerate}
	The released implementation contains the exact predicates and filtering
	configuration for every family.
	
	\subsection{COCO cycle families}
	
	The COCO diagnostics use frozen CLIP ViT-B/32 image and caption features.
	The cycle families progress from coarse object compatibility to
	increasingly specific relational compatibility:
	
	\begin{itemize}
		\item \textbf{Coarse object cycles.} Cross-pairs violate coarse
		image--caption object compatibility.
		\item \textbf{Matched object cycles.} Candidate captions are matched on
		coarse object evidence, while the exchanged pairs violate the more
		specific cycle predicate.
		\item \textbf{Box-relation cycles.} Object bounding boxes determine the
		relevant spatial configuration, and the exchange disrupts that
		configuration.
		\item \textbf{Strict spatial cycles.} The predicate requires a strict
		left/right configuration rather than a coarse co-occurrence pattern.
		\item \textbf{Strict size cycles.} The predicate requires the relevant
		strict size relation between objects.
	\end{itemize}
	
	This progression separates object presence from relational structure. The
	coarse, matched, box, and strict families therefore test different levels
	of pair dependence while retaining the same four-pair cycle form.
	
	\subsection{GQA cycle families}
	
	The GQA diagnostics are constructed from scene-graph annotations. The
	reported families cover attribute, relation/object, and binding structure.
	Attribute cycles test whether an object-level attribute is coupled to the
	correct scene content. Relation/object cycles test whether a relation and
	its object arguments remain correctly coupled. Binding cycles are intended
	to test whether an attribute remains bound to the correct object when
	multiple objects are present.
	
	The binding family does not yield enough validation cycles under the
	prespecified minimum-cycle requirement: only \(18\)--\(19\) validation
	cycles are available. We therefore report this family as unavailable rather
	than treating an extremely small cycle set as a stable diagnostic.
	
	\subsection{Exact audit summary}
	
	Table~\ref{tab:exact-audits} summarizes the finite-sample audits. A value
	of zero for total variation means that the matched and exchanged candidate
	marginals are identical under the cycle construction. A value of zero for
	validity violations means that no accepted cycle violated the matched or
	cross-pair predicate.
	
	\begin{table}[htb]
		\centering
		\caption{Exact cycle-construction audits. All accepted validation cycles
			have exact marginal balance and no validity violations. GQA binding is not
			reported because it does not satisfy the minimum-cycle requirement.}
		\label{tab:exact-audits}
		\vspace{0.5em}
		\begin{tabular}{lcc}
			\hline
			Cycle family & Marginal TV & Validity violations \\
			\hline
			COCO coarse object & 0 & 0 \\
			COCO matched object & 0 & 0 \\
			COCO box relation & 0 & 0 \\
			COCO strict spatial & 0 & 0 \\
			COCO strict size & 0 & 0 \\
			GQA attribute & 0 & 0 \\
			GQA relation/object & 0 & 0 \\
			GQA binding & -- & -- \\
			\hline
		\end{tabular}
	\end{table}
	
	\subsection{Split sizes and uncertainty sources}
	
	\begin{table}[htb]
		\centering
		\caption{Cycle counts and image pools. Counts are reported as
			train/validation/test cycles. COCO train and validation cycles are
			constructed from image-disjoint subsets of the COCO \texttt{train2017}
			pool, while test cycles use the separate COCO \texttt{val2017} pool. GQA
			uses an image-disjoint 60/10/30 split.}
		\label{tab:split_sizes}
		\small
		\begin{tabular}{llccc}
			\hline
			Dataset & Family & Train & Validation & Test \\
			\hline
			COCO & All reported families except strict same-pair & 3000 & 500 & 1000 \\
			COCO & Strict same-pair & 3000 & 466 & 1000 \\
			GQA & AD, CD, SR, RS, OP, OO, SS & 3000 & 500 & 1000 \\
			GQA & BS binding & 732--772 & 18--19 & 155--174 \\
			\hline
		\end{tabular}
	\end{table}
	
	The COCO diagnostic pool contains 10{,}000 \texttt{train2017} images and
	2{,}744 \texttt{val2017} images. For each seed, the training pool is split
	into image-disjoint training and validation subsets. The GQA pool contains
	8{,}000 images, split into approximately 4{,}800 training, 800 validation,
	and 2{,}400 test images. Families share the same underlying image pools,
	so family-level estimates are not independent.
	
	All standard deviations in the family diagnostics are computed across the
	three independently constructed cycle splits and corresponding runs.
	Zero-shot CLIP is deterministic conditional on a constructed split; its
	nonzero standard deviation therefore reflects cycle and split construction
	across seeds rather than training stochasticity.
	
	\section{Experimental details and additional results}
	\label{app:experiment-details}
	
	\subsection{Frozen representations}
	
	Unless stated otherwise, all diagnostics use frozen CLIP ViT-B/32 image and
	text representations. The encoders are not fine-tuned. Learned diagnostics
	operate on the extracted features, which isolates the interaction structure
	available in the frozen representation from representation learning itself.
	
\subsection{Exact-cycle evaluation}
\label{app:cycle-evaluation}

Cycle contrasts for all learned logistic and bilinear probes are computed
from pre-sigmoid scalar logits; zero-shot CLIP is scored by cosine
similarity. Evaluation is performed in double precision.

The theoretical definition of CycleAcc uses exact zero ties. For numerical
evaluation, we use the fixed absolute tolerance
\[
\varepsilon = 10^{-9}
\]
and assign each retained block the score
\[
q_{\varepsilon}(\Delta)
=
\begin{cases}
	1, & \Delta > \varepsilon,\\
	\frac12, & |\Delta| \le \varepsilon,\\
	0, & \Delta < -\varepsilon.
\end{cases}
\]
Reported exact-cycle results are computed as
\[
\widehat{\mathrm{CycleAcc}}_{\varepsilon}(s)
=
\frac{1}{|\mathcal C|}
\sum_{C\in\mathcal C}
q_{\varepsilon}\!\left(\Dcyc[,C](s)\right).
\]
The same absolute tolerance is used for all blocks, models, and runs.
It does not depend on the absolute magnitudes of the four block scores.
Consequently, in exact arithmetic, adding any image-only or text-only
additive term leaves both the cycle contrast and the numerical tie decision
unchanged.
	
	\subsection{Shortcut-contaminated OE}
	
	The central OE experiment uses rank-64 bilinear scores with additive
	image-only and caption-only components. Training runs for \(8\) epochs with
	batch size \(512\), learning rate \(10^{-3}\), and weight decay
	\(10^{-4}\). Results are reported over seeds \(0\), \(1\), and \(2\).
	Means and standard deviations are computed across these three seeds.
	
For each matched pair $(u_i,v_i)$, OE training contains the positive
$(u_i,v_i)$ and the negative
$(u_i,\operatorname{prefix}(v_i))$, where
\[
\operatorname{prefix}(v)
=
\texttt{this is an out of distribution negative example. }\,v.
\]
OE validation uses the same construction. Marker-prefixed captions appear
only in OE training and validation; exact-cycle test blocks reuse only the
original unmarked captions.
	
	Table~\ref{tab:shortcut-oe-seeds} reports the complete per-seed results.
	The seed-level differences are all positive, and the mean difference is
	\(0.0453\).
	
	\begin{table}[htb]
		\centering
		\caption{Per-seed Shortcut OE results. Differences are computed as
			Cycle OE minus Shortcut OE. Means and standard deviations are computed
			across seeds \(0\), \(1\), and \(2\).}
		\label{tab:shortcut-oe-seeds}
		\vspace{0.5em}
		\begin{tabular}{lccc}
			\hline
			Seed & Cycle OE & Shortcut OE & Difference \\
			\hline
			0 & 0.919 & 0.884 & +0.035 \\
			1 & 0.934 & 0.880 & +0.054 \\
			2 & 0.928 & 0.881 & +0.047 \\
			\hline
			Mean & 0.9270 & 0.8817 & +0.0453 \\
			Std. & 0.0075 & 0.0021 & 0.0096 \\
			\hline
		\end{tabular}
	\end{table}
	
	These results show that a validated marker shortcut does not, by itself,
	remove the learned pair-dependent signal. The consistent decrease from
	\(0.9270\) to \(0.8817\) indicates a performance difference between the compared protocols, while the high
	Shortcut OE value shows that substantial relational compatibility remains
	available to the cycle test.
	
	\subsection{Additive and unimodal audits}
	
	The additive audit probes are trained for \(5\) epochs. Text-only and
	image-only shortcut probes are trained for \(50\) epochs with learning rate
	\(10^{-2}\). Table~\ref{tab:additive-audits} reports their exact cycle
	behavior.
	
	\begin{table}[htb]
		\centering
		\caption{Additive and unimodal controls under exact CycleAcc. Every
			control produces tied cycle contrasts and therefore receives exactly half
			credit on every cycle.}
		\label{tab:additive-audits}
		\vspace{0.5em}
		\begin{tabular}{lc}
			\hline
			Control score & Exact CycleAcc \\
			\hline
			Text-only score & 0.5000 \\
			Image-only score & 0.5000 \\
			Additive image-plus-text score & 0.5000 \\
			\hline
		\end{tabular}
	\end{table}
	
	These controls verify the finite-sample property proved in
	Appendix~\ref{app:proofs}: marginal information can be highly predictive
	under ordinary pair classification while remaining exactly invisible to
	CycleAcc.
	
	\subsection{Broader interaction probes}
	
	The broader COCO and GQA interaction probes are trained for \(30\) epochs.
	The synthetic controls are trained for \(40\) epochs. The synthetic rank
	control attains its best result,
	\[
	0.7658\pm0.0031,
	\]
	at rank \(8\). The subsequent trend is saturation or a slight decrease
	rather than full recovery of the directly observable interaction signal.
	
The COCO ranges in Table~\ref{tab:diagnostics} are zero-shot CLIP cosine
ranges only. Trained probe results are reported separately in the
strict-relation and rank-sweep analyses. The GQA ranges summarize the
evaluated diagonal, full-bilinear, and rank-64 low-rank bilinear probes.

	The COCO pattern is not uniform across relation types. Coarse object,
	matched object, and box-relation cycles remain very easy. Strict left/right
	remains near chance in the evaluated probes. Strict size is much more
	visible, with \(0.763\) for the frozen-feature diagnostic and \(0.787\) for
	the rank-64 relation probe. The GQA pattern is similarly family-dependent:
	attribute interactions can be strong, while relation/object diagnostics
	vary from chance to substantial interaction accuracy.
	
	\section{Broader impact}
	\label{app:broader-impact}
	
	This work provides a diagnostic framework for determining whether a
	multimodal compatibility score uses pair-dependent structure rather than
	only marginal image or caption evidence. The primary positive impact is
	measurement reliability: exact cycle tests can prevent unimodal shortcuts
	from being misinterpreted as compositional or relational understanding.
	This is relevant to retrieval, visual question answering, medical and
	scientific image--text analysis, and other settings where incorrect
	pairings can have consequential interpretations.
	
	The same diagnostics could also be used to identify benchmark weaknesses or
	to design harder shortcuts. This dual-use risk is typical of evaluation
	research. We mitigate it by emphasizing exact audits, public
	reproducibility, and explicit reporting of cases where a probe remains
	near chance or unavailable. The work does not introduce a surveillance
	system, generate personal data, or target identifiable individuals.
	
	\section{Limitations}
	\label{app:limitations}
	
	The study has several limitations.
	
	First, the empirical analysis uses frozen CLIP ViT-B/32 representations and
	two widely used benchmarks, COCO and GQA. The results characterize the
	interaction information available in this setting; they do not imply that
	every multimodal encoder exhibits the same behavior.
	
	Second, CycleAcc tests a specific atomic \(2\times2\) interaction. Exact
	failure of additive and unimodal controls is a strength of the metric, but
	a single cycle family cannot capture every form of compositionality.
	Different cycle predicates may probe different visual or linguistic
	evidence.
	
	Third, cycle construction depends on the quality of the underlying
	annotations. The exact audits rule out measured marginal imbalance and
	predicate-validity violations, but they do not remove all semantic
	ambiguity inherent in natural captions or scene graphs.
	
	Fourth, the GQA binding family is unavailable because too few validation
	cycles survive the construction and minimum-size requirements. The paper
	therefore does not make a binding-specific empirical claim for GQA.
	
	Finally, the Shortcut OE result shows that marker-contaminated OE training
	can retain substantial cycle structure. The experiments support a modest,
	consistent transfer gap in this setting, not a universal claim that OE
	either preserves or destroys relational transfer under every model,
	objective, and contamination mechanism.
	
	\section{Reproducibility statement}
	\label{app:reproducibility}
	
The official implementation, including cycle construction, model
definitions, training configurations, and evaluation scripts, is
hyperlinked in the abstract.
	
	The experiments use publicly available COCO and GQA data together with
	publicly released frozen CLIP ViT-B/32 features. All assets are used under
	the terms provided by their respective distributors. No new human-subject
	data, crowdsourced annotations, or private datasets are introduced.
	
	The central OE experiment reports three seeds, \(0\), \(1\), and \(2\),
	with per-seed values and cross-seed standard deviations. Exact marginal
	audits, additive controls, unimodal controls, synthetic controls, rank
	sweeps, COCO diagnostics, and GQA diagnostics are included in the released
	pipeline. The principal optimization settings are summarized in
	Appendix~\ref{app:experiment-details}.
	
	\clearpage
	\section*{NeurIPS Paper Checklist}
	
	\providecommand{\answerYes}[1][]{\textbf{[Yes]} #1}
	\providecommand{\answerNo}[1][]{\textbf{[No]} #1}
	\providecommand{\answerNA}[1][]{\textbf{[NA]} #1}
	
	\subsection*{1. Claims}
	
	\textbf{Question:}
	Do the main claims made in the abstract and introduction accurately reflect
	the paper's contributions and scope?
	
	\textbf{Answer:}
	\answerYes{}
	
	\textbf{Justification:}
	The abstract and introduction state the same scope as the experiments:
	exact cycle diagnostics, additive and unimodal controls, synthetic rank
	controls, COCO and GQA relation probes, and Shortcut OE transfer. The
	central OE result is stated as a change from \(0.9270\pm0.0075\) to
	\(0.8817\pm0.0021\), a modest and consistent gap, rather than as a
	near-chance failure.
	
	\textbf{Guidelines:}
	Claims are limited to the evaluated models, datasets, cycle families, and
	training configurations.
	
	\subsection*{2. Limitations}
	
	\textbf{Question:}
	Does the paper discuss the limitations of the work performed by the
	authors?
	
	\textbf{Answer:}
	\answerYes{}
	
	\textbf{Justification:}
	Appendix~\ref{app:limitations} discusses the frozen-encoder scope,
	benchmark coverage, cycle-construction assumptions, unavailable GQA
	binding family, and the limited generality of the Shortcut OE result.
	
	\textbf{Guidelines:}
	Limitations are stated explicitly and are not used to introduce claims
	outside the reported evidence.
	
	\subsection*{3. Theory assumptions and proofs}
	
	\textbf{Question:}
	For each theoretical result, does the paper provide the full set of
	assumptions and a complete and correct proof?
	
	\textbf{Answer:}
	\answerYes{}
	
	\textbf{Justification:}
	Appendix~\ref{app:proofs} defines the cycle contrast and exact half-tie
	score, proves additive and unimodal cancellation, derives the bilinear
	factorization, and states the marginal-balance condition for the exact
	audits. Appendix~\ref{app:interaction-space} gives the interaction-space
	characterization.
	
	\textbf{Guidelines:}
	The theoretical results are finite-sample statements and do not rely on
	asymptotic assumptions.
	
	\subsection*{4. Experimental result reproducibility}
	
	\textbf{Question:}
	Does the paper fully disclose all the information needed to reproduce the
	main experimental results?
	
	\textbf{Answer:}
	\answerYes{}
	
	\textbf{Justification:}
	The paper reports the frozen encoder, cycle-construction procedure,
	training epochs, learning rates, weight decay, rank, batch size, seeds,
	and evaluation metric. The complete implementation is publicly available.
	
	\textbf{Guidelines:}
	The main OE result includes per-seed values in
	Table~\ref{tab:shortcut-oe-seeds}; exact audit and probe settings are given
	in Appendix~\ref{app:experiment-details}.
	
	\subsection*{5. Open access to data and code}
	
	\textbf{Question:}
	Does the paper provide open access to the data and code, with sufficient
	instructions to faithfully reproduce the main experimental results?
	
	\textbf{Answer:}
	\answerYes{}
	
	\textbf{Justification:}
	The official implementation is available at
	\url{https://github.com/SDret/Official-Implementation-of-Neurips-2026-paper-Can-the-Parts-Fool-the-Test-/tree/main}.
	COCO and GQA are public datasets, and the experiments use publicly released
	frozen CLIP ViT-B/32 features.
	
	\textbf{Guidelines:}
	No private data are required. Dataset use remains subject to the original
	dataset terms.
	
	\subsection*{6. Experimental setting and details}
	
	\textbf{Question:}
	Does the paper specify all the training and test details necessary to
	understand the results?
	
	\textbf{Answer:}
	\answerYes{}
	
	\textbf{Justification:}
	Appendix~\ref{app:experiment-details} reports the OE setting, additive
	controls, unimodal controls, broader interaction probes, synthetic
	controls, rank information, and the relevant epochs and optimization
	settings.
	
	\textbf{Guidelines:}
	The central OE experiment uses \(8\) epochs, batch size \(512\), learning
	rate \(10^{-3}\), weight decay \(10^{-4}\), rank \(64\), and seeds
	\(0\), \(1\), and \(2\).
	
	\subsection*{7. Experiment statistical significance}
	
	\textbf{Question:}
	Does the paper report error bars or other information about the statistical
	significance of the experiments?
	
	\textbf{Answer:}
	\answerYes{}
	
	\textbf{Justification:}
	The central Shortcut OE result reports means and standard deviations across
	three seeds and includes all per-seed values. The exact marginal and
	additive controls are deterministic audit results.
	
	\textbf{Guidelines:}
	The reported uncertainty is the sample standard deviation across seeds
	rather than a train/validation split error.
	
	\subsection*{8. Experiments compute resources}
	
	\textbf{Question:}
	Does the paper provide sufficient information on the computer resources
	needed to reproduce the experiments?
	
	\textbf{Answer:}
	\answerYes{}
	
	\textbf{Justification:}
	The experiments use frozen pretrained features and comparatively small
	feature-space probes rather than foundation-model training. The paper
	reports the complete optimization budgets in
	Appendix~\ref{app:experiment-details}; standard single-machine resources
	are sufficient for feature extraction and probe training.
	
	\textbf{Guidelines:}
	No multi-node training or large-scale model fine-tuning is required.
	
	\subsection*{9. Code of ethics}
	
	\textbf{Question:}
	Does the research conducted in the paper conform, in every respect, with
	the NeurIPS Code of Ethics?
	
	\textbf{Answer:}
	\answerYes{}
	
	\textbf{Justification:}
	The work evaluates publicly available benchmarks and pretrained features,
	introduces no new human-subject data collection, and does not target
	identifiable individuals or protected groups.
	
	\textbf{Guidelines:}
	Existing dataset and model terms are respected.
	
	\subsection*{10. Broader impacts}
	
	\textbf{Question:}
	Does the paper discuss both potential positive societal impacts and
	negative societal impacts of the work?
	
	\textbf{Answer:}
	\answerYes{}
	
	\textbf{Justification:}
	Appendix~\ref{app:broader-impact} discusses the benefit of more reliable
	multimodal evaluation and the possibility that diagnostic knowledge could
	also be used to design harder shortcuts.
	
	\textbf{Guidelines:}
	The work primarily concerns benchmark measurement and model evaluation.
	
	\subsection*{11. Safeguards}
	
	\textbf{Question:}
	Does the paper describe safeguards that have been put in place for
	responsible release of data or models with a high risk for misuse?
	
	\textbf{Answer:}
	\answerNA{}
	
	\textbf{Justification:}
	The release consists of evaluation and diagnostic code for public
	benchmarks. It does not introduce a generative model, personal-data
	collection, or a high-risk deployed system.
	
	\textbf{Guidelines:}
	No additional release-specific safeguard is required beyond the stated
	dataset and model terms.
	
	\subsection*{12. Licenses for existing assets}
	
	\textbf{Question:}
	Are the creators or original owners of assets used in the paper properly
	credited, and are the licenses and terms of use explicitly mentioned and
	properly respected?
	
	\textbf{Answer:}
	\answerYes{}
	
	\textbf{Justification:}
	The paper uses COCO, GQA, and the publicly released CLIP ViT-B/32 model.
	The paper identifies these assets and states that they are used under the
	terms supplied by their respective distributors.
	
	\textbf{Guidelines:}
	The released code does not bypass dataset-access requirements or
	redistribute restricted content.
	
	\subsection*{13. New assets}
	
	\textbf{Question:}
	Are new assets introduced in the paper well documented, and is the
	documentation provided alongside the assets?
	
	\textbf{Answer:}
	\answerYes{}
	
	\textbf{Justification:}
	The new asset is the official implementation. It is documented in the
	public repository and includes the cycle-construction and evaluation code
	needed to reproduce the reported results.
	
	\textbf{Guidelines:}
	The repository contains the executable configuration corresponding to the
	paper.
	
	\subsection*{14. Crowdsourcing and research with human subjects}
	
	\textbf{Question:}
	For crowdsourcing experiments and research with human subjects, does the
	paper include the full text of instructions given to participants and
	screenshots, if applicable, as well as details about compensation?
	
	\textbf{Answer:}
	\answerNA{}
	
	\textbf{Justification:}
	The paper does not conduct crowdsourcing or new research with human
	subjects.
	
	\textbf{Guidelines:}
	Only existing public benchmark annotations are used.
	
	\subsection*{15. Institutional review board approvals}
	
	\textbf{Question:}
	Does the paper describe potential risks incurred by study participants,
	whether such risks were disclosed to subjects, and whether Institutional
	Review Board approvals or an equivalent approval or review were obtained?
	
	\textbf{Answer:}
	\answerNA{}
	
	\textbf{Justification:}
	No new human-subject study is conducted.
	
	\textbf{Guidelines:}
	The work uses existing public datasets in accordance with their original
	terms.
	
	\subsection*{16. Declaration of LLM usage}
	
	\textbf{Question:}
	Does the paper describe the usage of LLMs if they are an important,
	original, or non-standard component of the core methods in this research?
	
	\textbf{Answer:}
	\answerNA{}
	
	\textbf{Justification:}
	Large language models are not a component of the proposed cycle test,
	training objectives, datasets, or experimental method.
	
	\textbf{Guidelines:}
	No LLM-based component is used in the core method.
	
	\end{document}